\newcommand{\R}{\mathbb{R}}
\newcommand{\E}{\mathop{\mathbb{E}}}
\newcommand{\bw}{\vec{w}}
\newcommand{\bx}{\vec{x}}
\newcommand{\bz}{\vec{z}}
\newcommand{\bxi}{\mathbf{\xi}}
\newcommand{\bm}{\vec{m}}
\newcommand{\night}{{\texttt{NIGT}\ }}
\newcommand{\G}{\mathfrak{g}}
\newcommand{\D}{\mathcal{D}}
\title{Momentum Improves Normalized SGD}
\author{Ashok Cutkosky\\Google Research\\\texttt{ashok@cutkosky.com} \and Harsh Mehta\\
Google Research\\
\texttt{harshm@google.com}}
\date{}
\begin{document}

\maketitle

\begin{abstract}
We provide an improved analysis of normalized SGD showing that adding momentum provably removes the need for large batch sizes on non-convex objectives. Then, we consider the case of objectives with bounded second derivative and show that in this case a small tweak to the momentum formula allows normalized SGD with momentum to find an $\epsilon$-critical point in $O(1/\epsilon^{3.5})$ iterations, matching the best-known rates without accruing any logarithmic factors or dependence on dimension. We also provide an adaptive method that automatically improves convergence rates when the variance in the gradients is small. Finally, we show that our method is effective when employed on popular large scale tasks such as ResNet-50 and BERT pretraining, matching the performance of the disparate methods used to get state-of-the-art results on both tasks.
\end{abstract}

\section{Non-Convex Stochastic Optimization}
The rise of deep learning has focused research attention on the problem of solving optimization problems that are \emph{high-dimensional}, \emph{large-scale}, and \emph{non-convex}. Modern neural networks can have billions of parameters (high-dimensional) \citep{Raffel2019ExploringTL, shazeer2017outrageously}, are trained using datasets containing millions of examples (large scale) \citep{imagenet_cvpr09} on objective functions that are non-convex. Because of these considerations, stochastic gradient descent (SGD) has emerged as the de-facto method-of-choice for training deep models. SGD has several properties that make it attractive for this setting. First, it can operate in a streaming manner by running over the large dataset while only keeping a few examples in memory at any one time. Second, SGD has a \emph{dimension-free} convergence guarantee for finding critical points on non-convex objectives. This means that SGD's convergence rate is independent of the dimension of the problem, allowing it to scale more easily to massive neural networks. In this paper we will investigate a variant of SGD that incorporates \emph{gradient normalization} and \emph{momentum}, which are both commonly used empirical modifications to SGD that are poorly understood in the non-convex setting.

A common way to analyze of algorithms for training neural nets is through the lens of stochastic optimization. In this setting, we are interested in minimizing a function:
\begin{align*}
    F(\bw) = \E_{\bxi \sim \D}[f(\bw, \xi)]
\end{align*}
Where here $\bw\in \R^d$ and $D$ is a distribution over an arbitrary space $\Xi$. This general formulation allows for cleaner analysis, but for a more concrete intuition, $\bw$ may represent the weights of a neural network, $\bxi$ may represent an individual training example, and $\D$ is the distribution over training examples. We do not know the entire function $F$, but we are allowed to access $F$ through a \emph{stochastic gradient oracle}. That is, given any $\bw$, we may compute $\nabla f(\bw, \xi)$ for some $\xi$ drawn i.i.d. from $\D$. Under this formulation, the SGD algorithm employs the simple iterative update:
\begin{align*}
    \bw_{t+1} = \bw_t - \eta_t \nabla f(\bw_t, \xi_t)
\end{align*}
where $\xi_1,\dots,\xi_T$ are i.i.d. examples from the distribution $\D$, and $\eta_t\in \R$ is a a scalar called the \emph{learning rate}. Since our objective $F$ is non-convex, it may be computationally intractable to find a true minimizer. Instead, we will look for an $\epsilon$-critical point. That is, we would like our optimization algorithm to output a point $\bw$ such that
\begin{align*}
    \E[\|\nabla F(\bw)\|]\le \epsilon
\end{align*}
where $\|\cdot\|$ indicates the standard 2-norm. The expectation is taken over the randomness of the stochastic gradient oracle and any randomness inherent in the algorithm itself. In order to make the problem tractable, we make three structural assumptions. First, we assume that $F$ is $L$-smooth, which means that for all $\bw$ and $\bx$,
\begin{align}
    \|\nabla F(\bw) - \nabla F(\bx)\| \le L\|\bw - \bx\|\label{eqn:smoothness}\tag{A1}
\end{align}
Second, we assume that the objective $F$ is bounded from below, and without loss of generality (since our algorithms will only access gradients rather than function values), we may assume it is positive:
\begin{align}
    F(\bw)\ge 0 \label{eqn:positive}\tag{A2}
\end{align}
Finally, we assume that the stochastic gradient oracle has bounded variance:
\begin{align}
\E_{\bxi}\left[\|\nabla f(\bw, \bxi) - \nabla F(\bw)\|^2\right]\le \sigma^2 \label{eqn:variance}\tag{A3}
\end{align}
Under these assumptions, SGD with an optimally tuned learning rate ensures that after $T$ iterations, we can output a point $\bw$ such that \citep{ghadimi2013stochastic}:
\begin{align}
\E[\|\nabla F(\bw)\|]\le O\left(\frac{1}{\sqrt{T}} + \frac{\sqrt{\sigma}}{T^{1/4}}\right)\label{eqn:sgd}
\end{align}
Spurred by the empirical success of SGD, there has been a tremendous amount of work in designing modifications that improve upon its performance in various ways.
However, it has recently been shown that the $O(1/T^{1/4})$ rate is optimal in the worst-case \citep{arjevani2019lower}, and so improvements must either make more assumptions about the problem setting, or provide an algorithm that can somehow interpolate between a worst-case and non-worst-case problem.

One popular modification to SGD is the use of \emph{adaptive learning rates}, popularized by AdaGrad \citep{duchi10adagrad}. These learning rates enable SGD to converge faster when the objective under consideration is in some technical sense ``easier'' than a worst-case objective\footnote{In this paper we use ``adaptive'' in a statistical sense of the word: an algorithm is adaptive if it automatically adjusts itself to some unknown parameter, such as the variance of the gradients. This is different from the idea of using a different learning rate for each dimension of an optimization problem that was also popularized by \cite{duchi10adagrad}.}, specifically when the variance of the loss from one example to another is small \citep{li2019convergence,ward2019adagrad}. The success of AdaGrad has inspired a huge number of related algorithms, including notably the Adam algorithm \citep{kingma2014adam}.

One of the key improvements added to AdaGrad by Adam is the use of \emph{momentum}. Inspired by the heavy-ball and acceleration algorithms in convex optimization \citep{polyak1964some, nesterov1983method}, momentum attempts to improve the convergence rate on non-convex objectives by modifying the update to have the form:
\begin{align*}
    \bm_{t} &= \beta \bm_{t-1} + (1-\beta) \nabla f(\bw_t, \xi_t)\\
    \bw_{t+1} &= \bw_t - \eta_t \bm_t
\end{align*}
Intuitively, the value $\bm$ is holding a running average of the past gradient values, and the hope is that this style of update may provide some kind of better stability or conditioning that enables improvements over the base SGD. Momentum has had dramatic empirical success, but although prior analyses have considered momentum updates \citep{reddi2018on,zaheer2018adaptive}, none of these have shown a strong theoretical benefit in using momentum, as their bounds do not improve on (\ref{eqn:sgd}).

Finally, a third popular modification is the use of \emph{normalized} updates. For example, the LARS \citep{you2017large} and LAMB \citep{you2019reducing} optimizers use updates similar to:
\begin{align}
    \bm_{t} &= \beta \bm_{t-1} + (1-\beta) \nabla f(\bw_t, \xi_t)\label{eqn:normalized1}\\
    \bw_{t+1} &= \bw_t - \eta_t \frac{\bm_t}{\|\bm_t\|}\label{eqn:normalized2}
\end{align}
Intuitively, this style of update attempts to capture the idea that in non-convex objectives, unlike convex ones, the \emph{magnitude} of the gradient provides less information about the value of the function, while the \emph{direction} still indicates the direction of steepest descent. However, all analyses of normalized updates we know of (e.g. \cite{you2017large, you2019reducing,  hazan2015beyond}) require the variance of the gradient oracle to be very small, or, equivalently, for the algorithm to make use of an extremely large batch-size in order to achieve any convergence guarantees. Intuitively, this requirement arises because the normalization can inflate very small errors: even if $m_t = \nabla F(\bw_t)+\zeta$ for some small error $\zeta$, it might be that $\frac{m_t}{\|m_t\|}$ is actually very far from $\frac{\nabla F(\bw_t)}{\|\nabla F(\bw_t)\|}$. Nevertheless, this update has also been used to empirically accelerate training neural networks.

From a more theoretical perspective, a recent approach to going beyong the SGD rate (\ref{eqn:sgd}) is to add some additional structural assumptions to the loss. One appealing assumption is \emph{second-order smoothness}, in which we assume that the third derivative of $F$ has bounded operator norm. Equivalently, for all $\bw$, and $\bz$, we have
\begin{align}
    \|\nabla^2 F(\bw) - \nabla^2 F(\bz)\|_{\text{op}} \le \rho \|\bw-\bz\|\label{eqn:secondsmooth}\tag{A4}
\end{align}
Many previous works have achieved improved results by utilizing this assumption in concert with stronger oracles, such as evaluating two gradients per each example or evaluating Hessian-vector products \citep{allen2018natasha, tripuraneni2018stochastic}. However, using our same stochastic gradient oracle model and this second-order smoothness assumption, it was recently proven \citep{fang2019sharp} that a variant of SGD can achieve a convergence rate of
\begin{align*}
    \|\nabla F(\bw)\|\le O\left(\frac{\text{polylog(d)}}{T^{2/7}}\right)
\end{align*}
where $d$ is the dimension of the problem.\footnote{In fact, this result provided a convergence to a \emph{local minimum}, which is stronger than a critical point.} This break-through result shows that even for fairly high-dimensional problems, SGD can obtain faster convergence than the initial analysis suggests. However, in modern deep learning architectures, $d$ can be on the order of billions \citep{radford2019language}. In this regime, it may easily hold that the logarithmic term is large enough that this new analysis of SGD does not actually suggest improved performance over the previous $O(1/T^{1/4})$ rate. To deal with extremely high-dimensional regimes, we would like a convergence rate that is completely dimension-free.

In a somewhat orthogonal direction, several algorithms have been proposed based on \emph{variance reduction} \citep{johnson2013accelerating}. Recently, these have provided provable improvements over SGD's convergence rate, finding a point with $\E[\|\nabla F(\bw)\|]\le O(1/T^{1/3})$ after $T$ iterations \citep{fang2018spider, zhou2018stochastic}. Unfortunately, these algorithms require \emph{two} gradient evaluations for every i.i.d. sample $\bxi$. Our model requires each $\bxi$ to be used only once, and so forbids this operation. Moreover, it turns out to be surprisingly hard to implement this two-gradient model efficiently in modern machine learning frameworks. Nevertheless, more recent work in this area has produced algorithms whose updates are very similar to the (unnormalized) momentum update \citep{cutkosky2019momentum, tran2019hybrid}, suggesting that momentum may operate by reducing variance in SGD.

Finally, another technique that also suggests a connection to momentum in reducing variance is \emph{implicit gradient transport} \citep{arnold2019reducing}. Implicit gradient transport is a very recent discovery that provably reduces the variance in gradient estimates in the special case that the Hessian of the objective is \emph{constant} (e.g. if the objective is a linear regression problem). The original presentation considers the following version of momentum:
\begin{align*}
    \bm_t &= \frac{t}{t+1}\bm_{t-1} + \frac{1}{t+1}\nabla f(\bw_t+ t(\bw_t - \bw_{t-1}),\bxi_t)\\
    \bw_{t+1} &= \bw_t - \eta_t \bm_t
\end{align*}
To gain some intuition for why this is a good idea when the Hessian is constant, suppose we have $\bm_{t-1}=\nabla F(\bw_{t-1}) + X$ where $X$ is some mean-zero random variable with variance $\sigma^2/t$. Let us also write $\nabla f(\bw_t+ t(\bw_t - \bw_{t-1}),\bxi_t)=\nabla F(\bw_t+ t(\bw_t - \bw_{t-1}))+ Y$ where $Y$ is also a mean-zero random variable with variance $\sigma^2$.  Then since the Hessian is constant, we have:
\begin{align*}
    \nabla F(\bw_{t-1}) &= \nabla F(\bw_t) + \nabla^2 F (\bw_{t-1} - \bw_t)\\
    \nabla F(\bw_t+ t(\bw_t - \bw_{t-1}))&=t\nabla^2 F(\bw_t - \bw_{t-1})+  \nabla F(\bw_t)\\
    \bm_t  &= \nabla F(\bw_t) + \frac{t X + Y}{t+1}
\end{align*}
Notice that $ \frac{t X + Y}{t+1}$ has variance $\sigma^2/(t+1)$, so that by induction, we will have for all $t$ that $\bm_t$ is an unbiased estimate of $\nabla F(\bw_t)$ with variance only $\sigma^2/(t+1)$. This technique shows great promise, but the current theoretical analysis is limited to convex quadratic objectives.

In this paper we address some of these preceding issues. First, we show that the normalized stochastic gradient descent update with momentum does \emph{not} require small variance or large batches in order to match the convergence rate of SGD. Next, we tackle the case of second-order smooth objectives. For this, we introduce a further modification of the momentum update based on the implicit gradient transport idea. We show that combining this newer kind of momentum with normalized gradient descent enables a dimension-free convergence rate of $O(1/T^{2/7})$. Moreover, we feel that our new analysis is substantially more straightforward than prior work, as demonstrated by the number of pages required to prove our results. Finally, we propose an adaptive version of our algorithm and show that this final algorithm's convergence guarantee automatically improves when the stochastic gradient oracle has small variance. We hope our results can shed some light in the empirical success of momentum in training deep networks.

In addition to the theoretical discussion, we also demonstrate effectiveness of our method \night (pronounced ``night") on popular deep learning tasks. We show comparisons of our method on 1) BERT pretraining and 2) Resnet-50. Adam is typically used to achieve state of the art results on BERT task \citep{devlin-etal-2019-bert} whereas SGD is used for Resnet-50 on Imagenet dataset since Adam fails to perform well on it \citep{you2019reducing, Anil2019MemoryEA}. We show comparison with the stronger baseline in each case. Finally, since momentum is the only significant slot variable kept, our method incurs greatly reduced memory overhead compared to other methods such as Adam. This is a significant practical advantage due to the continuing trend of increasingly large models \citep{Shazeer2018AdafactorAL, Anil2019MemoryEA}. 

The rest of this paper is organized as follows: in Section \ref{sec:normalized}, we provide our first result showing that momentum can be used to ``rescue'' normalized SGD in the high-variance/small-batch regime. In Section \ref{sec:igt}, we expand upon this theory to show that incorporating a momentum update inspired by \cite{arnold2019reducing} allows for improved convergence rates when the objective is second-order smooth, and we proceed to incorporate adaptivity into our analysis in Section \ref{sec:adaptive}. In Section \ref{sec:experimental} we describe our implementation and experimental study, and we provide some concluding remarks and open problems in Section \ref{sec:conclusion}.

\section{Normalized SGD with Momentum}\label{sec:normalized}
In this section, we analyze the normalized SGD with momentum update given by equations (\ref{eqn:normalized1}) and (\ref{eqn:normalized2}). The result is Theorem \ref{thm:normalizedsgd}, which shows that the addition of momentum allows normalized SGD to match the optimal convergence rate obtained by ordinary SGD, without requiring any large batch sizes. To gain some intuition for why the momentum is necessary in this case, consider a one-dimensional optimization scenario in which $\nabla f(\bw, \xi) = p$ with probability $1-p$ and $p-1$ with probability $p$ for some $p\in(0,1/2)$. Then we clearly have $\nabla F(\bw)=\E[\nabla f(\bw,\xi)]=0$, so that intuitively the ``correct'' thing for the algorithm to do is to not move, at least on average. Unfortunately, we have $\E\left[\frac{\nabla f(\bw, \xi)}{\|\nabla f(\bw,\xi)\|}\right]=1-2p>0$, so that the algorithm will be biased to moving \emph{away} from this critical point. The problem here was that, without momentum, the error in the gradient estimate before normalization is much larger than the true value of the gradient. In order to fix this, one must decrease the variance in the stochastic gradient oracle by employing a very large batch size, usually on the order of $T$ \citep{you2019reducing,hazan2015beyond}. Our Theorem \ref{thm:normalizedsgd} avoids this requirement.

\begin{restatable}{Theorem}{thmnormalizedsgd}\label{thm:normalizedsgd}
Suppose $F$ and $\mathcal{D}$ satisfy the assumptions (\ref{eqn:smoothness}), (\ref{eqn:positive}) and (\ref{eqn:variance}). Let $\bw_1$ be some given initial point and set $\bm_1=\nabla f(\bw_1, \xi_1)$. Let $R$ be any upper bound on $F(\bw_1)$. Set $\alpha = \min\left(\frac{\sqrt{RL}}{\sigma \sqrt{T}}, 1\right)$ and $\eta = \frac{\sqrt{R\alpha}}{\sqrt{TL}}$ Then let $\bw_t$ be the iterates produced by the recurrences (\ref{eqn:normalized1}) and (\ref{eqn:normalized2}) with $\eta_t=\eta$ and $\beta_t = 1-\alpha$ for all $t$. Then the average norm of $\|\nabla F(\bw_t)\|$ satisfies:
\begin{align*}
    \frac{1}{T}\sum_{t=1}^T\E\left[ \|\nabla F(\bw_t)\|\right]&\le\frac{29\sqrt{RL}}{\sqrt{T}} +  \frac{21\sqrt{\sigma}(RL)^{1/4}}{T^{1/4}}   + \frac{8\sigma }{\sqrt{RLT}}
\end{align*}
\end{restatable}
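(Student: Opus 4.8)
The natural starting point is the standard descent lemma applied to the normalized update. Since $F$ is $L$-smooth (A1), with $\bw_{t+1} = \bw_t - \eta \frac{\bm_t}{\|\bm_t\|}$ we get
$$F(\bw_{t+1}) \le F(\bw_t) - \eta \left\langle \nabla F(\bw_t), \frac{\bm_t}{\|\bm_t\|}\right\rangle + \frac{L\eta^2}{2}.$$
The plan is to show that the inner product term is, up to the error $\|\bm_t - \nabla F(\bw_t)\|$, essentially $\|\nabla F(\bw_t)\|$. The key geometric fact here is that for any two vectors $\vec{a},\vec{b}$ with $\vec{b}\neq 0$, $\langle \vec{a}, \frac{\vec{b}}{\|\vec{b}\|}\rangle \ge \|\vec{a}\| - 2\|\vec{a}-\vec{b}\|$. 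Applying this with $\vec{a}=\nabla F(\bw_t)$, $\vec{b}=\bm_t$ and letting $\vec{\epsilon}_t := \bm_t - \nabla F(\bw_t)$ be the momentum error, telescoping the descent inequality over $t=1,\dots,T$ and using (A2) yields a bound of the form
$$\frac{1}{T}\sum_{t=1}^T \|\nabla F(\bw_t)\| \le \frac{R}{\eta T} + \frac{L\eta}{2} + \frac{2}{T}\sum_{t=1}^T \|\vec{\epsilon}_t\|.$$

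**Controlling the momentum error.** The crux is bounding $\E\|\vec{\epsilon}_t\|$. Write $\vec{\epsilon}_t = \bm_t - \nabla F(\bw_t)$ and expand the recursion $\bm_t = (1-\alpha)\bm_{t-1} + \alpha \nabla f(\bw_t,\xi_t)$. I would split the error into a ``stochastic'' part and a ``drift'' part:
$$\vec{\epsilon}_t = (1-\alpha)\vec{\epsilon}_{t-1} + \alpha\big(\nabla f(\bw_t,\xi_t) - \nabla F(\bw_t)\big) + (1-\alpha)\big(\nabla F(\bw_{t-1}) - \nabla F(\bw_t)\big).$$
The middle term is a martingale-difference-like noise term with conditional variance at most $\alpha^2\sigma^2$ by (A3); unrolling the recursion, the accumulated noise has variance $O(\alpha\sigma^2)$ (the geometric sum $\sum (1-\alpha)^{2k}\alpha^2\sigma^2 \approx \alpha\sigma^2/2$), so its contribution to $\E\|\cdot\|$ is $O(\sqrt{\alpha}\,\sigma)$. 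The drift term is bounded using $L$-smoothness by $(1-\alpha)L\|\bw_{t-1}-\bw_t\| = (1-\alpha)L\eta$ (since the step size is normalized, $\|\bw_t - \bw_{t-1}\| = \eta$ exactly); unrolling, its total contribution is $O(L\eta)$. Handling $t=1$ separately (where $\bm_1 = \nabla f(\bw_1,\xi_1)$ so $\|\vec{\epsilon}_1\|\le\sigma$ in expectation-of-square sense and the geometric weight $(1-\alpha)^{t-1}$ kills it) gives $\E\|\vec{\epsilon}_t\| \lesssim \sqrt{\alpha}\sigma + L\eta/\alpha$.

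**Assembling and tuning.** Combining, $\frac{1}{T}\sum\E\|\nabla F(\bw_t)\| \lesssim \frac{R}{\eta T} + L\eta + \sqrt{\alpha}\,\sigma + \frac{L\eta}{\alpha}$. Substituting $\eta = \sqrt{R\alpha/(TL)}$ makes the first two terms $\sqrt{RL\alpha/T}$-ish and the $L\eta/\alpha$ term becomes $\sqrt{RL/(\alpha T)}$; then choosing $\alpha = \min(\sqrt{RL}/(\sigma\sqrt T),1)$ balances $\sqrt{\alpha}\sigma$ against $\sqrt{RL/(\alpha T)}$, producing the three claimed terms $\sqrt{RL}/\sqrt T$, $\sqrt\sigma(RL)^{1/4}/T^{1/4}$, and $\sigma/\sqrt{RLT}$ (the last one coming from the $\alpha=1$ clipping regime). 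The main obstacle is the error-control step: one must be careful that the noise term is a genuine martingale sum so that variances (not standard deviations) add across the $T$ steps — this is what beats the naive bound and removes the large-batch requirement — and one must track the $t=1$ boundary term and the interaction between the geometric decay rate $\alpha$ and the smoothness drift $L\eta$ cleanly enough to get explicit small constants rather than just $O(\cdot)$.
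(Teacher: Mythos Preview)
Your plan matches the paper's proof essentially step for step: the same error decomposition $\hat\epsilon_t=(1-\alpha)\hat\epsilon_{t-1}+\alpha(\nabla f(\bw_t,\xi_t)-\nabla F(\bw_t))+(1-\alpha)(\nabla F(\bw_{t-1})-\nabla F(\bw_t))$, the same unrolling with the martingale observation that the noise variances (not norms) add geometrically to give $O(\sqrt\alpha\,\sigma)$, the same $L\eta/\alpha$ drift bound, and the same tuning of $\eta$ and $\alpha$.

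The one substantive difference is in the one-step descent lemma. The paper proves its Lemma~\ref{thm:onestepnormalized} by a case split on whether $\|\nabla F(\bw_t)\|\ge 2\|\hat\epsilon_t\|$, obtaining $F(\bw_{t+1})-F(\bw_t)\le -\tfrac{\eta}{3}\|\nabla F(\bw_t)\|+\tfrac{8\eta}{3}\|\hat\epsilon_t\|+\tfrac{L\eta^2}{2}$. Your direct inequality $\langle \vec a,\vec b/\|\vec b\|\rangle\ge\|\vec a\|-2\|\vec a-\vec b\|$ (which follows from $\langle \vec a,\vec b/\|\vec b\|\rangle=\|\vec b\|-\langle \vec b-\vec a,\vec b/\|\vec b\|\rangle\ge\|\vec b\|-\|\vec b-\vec a\|\ge\|\vec a\|-2\|\vec a-\vec b\|$) is cleaner and yields the sharper coefficients $1$ and $2$ in place of $3$ and $8$. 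Both routes give the same rates; yours just lands smaller absolute constants.

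One minor correction to your narrative: the $\sigma/\sqrt{RLT}$ term in the final bound does not come from the $\alpha=1$ clipping regime (there the bound is simply $O(\sqrt{RL/T})$). It arises from the burn-in term $\sigma/\alpha$---the contribution of the initial error $\hat\epsilon_1$ summed through the geometric decay---evaluated at $\alpha=\sqrt{RL}/(\sigma\sqrt T)$ and then divided by $T$.
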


Before proving this Theorem, we provide a general technical Lemma about each step of normalized SGD:
\begin{restatable}{Lemma}{thmonestepnormalized}\label{thm:onestepnormalized}
uppose $F$ and $\mathcal{D}$ satisfy the assumptions (\ref{eqn:smoothness}) and (\ref{eqn:variance}). Let $\bw_1$ be some initial point, and consider the updates:
\begin{align*}
    \bw_{t+1} = \bw_t - \eta_t \frac{\hat g_t}{\|\hat g_t\|}
\end{align*}
where $\hat g_t$ is generated by some arbitrary process or algorithm. Let $\hat \epsilon_t = \hat g_t - \nabla F(\bw_t)$. Then we have
\begin{align*}
    F(\bw_{t+1})-F(\bw_t)&\le -\frac{\eta_t}{3}\|\nabla F(\bw_t)\| + \frac{8\eta_t}{3}\|\hat \epsilon_t\| +\frac{L\eta_t^2}{2}
\end{align*}
In particular, if $F$ also satisfies (\ref{eqn:positive}) and $\eta_t$ is a constant $\eta$ for all $t$, we have:
\begin{align*}
    \sum_{t=1}^T \|\nabla F(\bw_t)\|&\le \frac{3F(\bw_1)}{\eta} + \frac{3L T\eta}{2}+8\sum_{t=1}^T\|\hat \epsilon_t\|
\end{align*}
\end{restatable}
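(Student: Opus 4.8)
The plan is to bound the one-step change $F(\bw_{t+1}) - F(\bw_t)$ using the smoothness assumption (A1) and then argue that, because the update direction $\hat g_t/\|\hat g_t\|$ is a unit vector that points ``roughly'' along $\nabla F(\bw_t)$ whenever $\|\hat\epsilon_t\|$ is small relative to $\|\nabla F(\bw_t)\|$, we get a genuine decrease proportional to $\|\nabla F(\bw_t)\|$. First I would invoke the standard smoothness inequality
\begin{align*}
F(\bw_{t+1}) - F(\bw_t) \le \langle \nabla F(\bw_t), \bw_{t+1} - \bw_t\rangle + \frac{L}{2}\|\bw_{t+1}-\bw_t\|^2 = -\eta_t \left\langle \nabla F(\bw_t), \frac{\hat g_t}{\|\hat g_t\|}\right\rangle + \frac{L\eta_t^2}{2},
\end{align*}
using that $\|\bw_{t+1}-\bw_t\| = \eta_t$. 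So everything reduces to lower-bounding the inner product $\langle \nabla F(\bw_t), \hat g_t/\|\hat g_t\|\rangle$ by something like $\tfrac13\|\nabla F(\bw_t)\| - \tfrac83\|\hat\epsilon_t\|$.

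For that inner-product bound, the key case analysis is whether $\|\hat\epsilon_t\|$ is large or small compared to $\|\nabla F(\bw_t)\|$. Write $\hat g_t = \nabla F(\bw_t) + \hat\epsilon_t$. If $\|\hat\epsilon_t\| \ge \tfrac12\|\nabla F(\bw_t)\|$ (or some similar constant threshold), then the claimed bound is trivial: the term $\tfrac83\|\hat\epsilon_t\|$ dominates $\tfrac13\|\nabla F(\bw_t)\|$ and, since $\hat g_t/\|\hat g_t\|$ is a unit vector, $\langle \nabla F(\bw_t), \hat g_t/\|\hat g_t\|\rangle \ge -\|\nabla F(\bw_t)\|$, which is covered. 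If instead $\|\hat\epsilon_t\|$ is small, then $\hat g_t$ is close to $\nabla F(\bw_t)$ both in norm and direction; one computes $\langle \nabla F(\bw_t), \hat g_t\rangle = \|\nabla F(\bw_t)\|^2 + \langle \nabla F(\bw_t), \hat\epsilon_t\rangle \ge \|\nabla F(\bw_t)\|(\|\nabla F(\bw_t)\| - \|\hat\epsilon_t\|)$, divides by $\|\hat g_t\| \le \|\nabla F(\bw_t)\| + \|\hat\epsilon_t\|$, and simplifies the resulting rational expression in the ratio $r = \|\hat\epsilon_t\|/\|\nabla F(\bw_t)\|$ to check it is at least $\tfrac13\|\nabla F(\bw_t)\| - (\text{const})\|\hat\epsilon_t\|$ over the relevant range of $r$. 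The constants $\tfrac13$ and $\tfrac83$ are presumably what make both cases go through simultaneously; I would treat pinning them down as the one genuinely fiddly computation.

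The main obstacle, then, is precisely this scalar optimization: verifying that $\frac{\|\nabla F(\bw_t)\| - \|\hat\epsilon_t\|}{1 + \|\hat\epsilon_t\|/\|\nabla F(\bw_t)\|} \ge \tfrac13\|\nabla F(\bw_t)\| - \tfrac83\|\hat\epsilon_t\|$ for all values of $\|\hat\epsilon_t\|$, handling both the ``small error'' regime (where the left side does the work) and the ``large error'' regime (where one falls back on the crude unit-vector bound $\ge -\|\nabla F(\bw_t)\|$). It is routine but requires picking the threshold and constants consistently; I expect the clean statement with $8/3$ is chosen exactly so that no case splitting into more than two pieces is needed.

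Finally, for the ``in particular'' clause, I would plug $\eta_t \equiv \eta$ into the per-step bound, sum over $t = 1, \dots, T$, telescope the left-hand side to $F(\bw_1) - F(\bw_{T+1}) \le F(\bw_1)$ using (A2), and rearrange:
\begin{align*}
\frac{\eta}{3}\sum_{t=1}^T \|\nabla F(\bw_t)\| \le F(\bw_1) - F(\bw_{T+1}) + \frac{8\eta}{3}\sum_{t=1}^T\|\hat\epsilon_t\| + \frac{L T\eta^2}{2} \le F(\bw_1) + \frac{8\eta}{3}\sum_{t=1}^T\|\hat\epsilon_t\| + \frac{L T\eta^2}{2},
\end{align*}
then multiply through by $3/\eta$ to obtain the stated bound $\sum_t \|\nabla F(\bw_t)\| \le \frac{3F(\bw_1)}{\eta} + \frac{3LT\eta}{2} + 8\sum_t\|\hat\epsilon_t\|$. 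This last part is pure bookkeeping once the one-step lemma is in hand.
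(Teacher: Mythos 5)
Your proposal is correct and follows essentially the same route as the paper's proof: the one-step descent inequality from smoothness, a case split at $\|\hat\epsilon_t\| = \tfrac12\|\nabla F(\bw_t)\|$, the Cauchy--Schwarz fallback $\langle\nabla F(\bw_t),\hat g_t/\|\hat g_t\|\rangle \ge -\|\nabla F(\bw_t)\|$ in the large-error regime, and telescoping under constant $\eta$. The one scalar inequality you flag as ``fiddly'' does in fact hold trivially: with $a=\|\nabla F(\bw_t)\|$, $b=\|\hat\epsilon_t\|$ and $a\ge 2b$, clearing denominators in $\tfrac{a(a-b)}{a+b}\ge\tfrac{a}{3}-\tfrac{8b}{3}$ gives $2a^2+4ab+8b^2\ge 0$; the paper instead bounds numerator and denominator separately (by $\tfrac12 a^2$ and $\tfrac32 a$) to get $\ge \tfrac13 a$ directly, which is a marginally quicker way to the same $1/3$ and $8/3$ constants.
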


\begin{proof}
Since $F$ is $L$-smooth, we have
\begin{align*}
    F(\bw_{t+1}) -F(\bw_t)&\le\langle \nabla F(\bw_t), \bw_{t+1}- \bw_t\rangle +\frac{L}{2}\|\bw_{t+1}-\bw_t\|^2\\
    &=-\eta_t \frac{\langle \nabla F(\bw_t), \hat g_t\rangle}{\|\hat g_t\|} + \frac{L\eta_t^2}{2}
\end{align*}
Now consider two cases, either $\|\nabla F(\bw_t)\|\ge 2\|\hat \epsilon_t\|$ or not. In the former case, we have
\begin{align*}
    -\frac{\langle \hat g_t, \nabla F(\bw_t)\rangle}{\|\hat g_t\|}&\le-\frac{\|\nabla F(\bw_t)\|^2 + \langle \nabla F(\bw_t), \hat\epsilon_t\rangle}{\|\nabla F(\bw_t) + \hat \epsilon_t\|}\\
    &\le -\frac{\|\nabla F(\bw_t)\|^2}{2\|\nabla F(\bw_t) + \hat \epsilon_t\|}\\
    &\le -\frac{\|\nabla F(\bw_t)\|}{3}\\
    &\le -\frac{\|\nabla F(\bw_t)\|}{3} + \frac{8\|\hat \epsilon_t\|}{3}
\end{align*}
In the latter case, we have
\begin{align*}
 -\frac{\langle \hat g_t, \nabla F(\bw_t)\rangle}{\|\hat g_t\|}&\le \|\nabla F(\bw_t)\|\\
&= -\frac{\|\nabla F(\bw_t)\|}{3} + \frac{4\|\nabla F(\bw_t)\|}{3}\\
&\le -\frac{\|\nabla F(\bw_t)\|}{3} + \frac{8\|\hat \epsilon_t\|}{3}
\end{align*}
So that the same bound holds in both cases. Putting everything together now proves the first statement of the Lemma. The second statement follows by summing over $T$, observing that the left-hand-side of the equation telescopes, and rearranging.
\end{proof}

By this Lemma, if we can show that our choices for $\beta$ and $\eta$ result in a small value for $\sum_{t=1}^T\|\hat \epsilon_t\|$, then setting $\eta$ appropriately will prove Theorem \ref{thm:normalizedsgd}. We carry out this agenda in more detail below:
\begin{proof}[Proof of Theorem \ref{thm:normalizedsgd}]
Define $\hat \epsilon_t = \bm_t - \nabla F(\bw_t)$ and define $\epsilon_t = \nabla f(\bw_t,\xi_t)-\nabla F(\bw_t)$. Notice that by our assumptions, we have
\begin{align*}
    \E[\|\epsilon_t\|^2]&\le \sigma^2\\
    \E[\langle \epsilon_i,\epsilon_j\rangle]&=0\ \text{ for }i\ne j
\end{align*}
Further, define $S(a,b) = \nabla F(a)-\nabla F(b)$. By smoothness, we must have $\|S(\bw_t,\bw_{t+1})\|\le L\|\bw_t-\bw_{t+1}\|=\eta L$ for all $t$.
With this notation, we have the following recursive formulation for any $t\ge 1$:
\begin{align*}
    \bm_{t+1} &= (1-\alpha) (\nabla F(\bw_t) + \hat \epsilon_t) + \alpha \nabla f(\bw_{t+1},\xi_{t+1})\\
    &=\nabla F(\bw_{t+1}) + (1-\alpha)(S(\bw_t,\bw_{t+1})+\hat \epsilon_t) + \alpha\epsilon_{t+1}\\
    \hat \epsilon_{t+1}&= (1-\alpha)S(\bw_t,\bw_{t+1})+(1-\alpha)\hat \epsilon_t + \alpha \epsilon_{t+1}
\end{align*}
Now, we unravel the recursion for $t$ iterations:
\begin{align*}
    \hat \epsilon_{t+1}&=(1-\alpha)^t\hat \epsilon_1 +\alpha\sum_{\tau=0}^{t-1} (1-\alpha)^\tau \epsilon_{t+1-\tau}+ (1-\alpha)\sum_{\tau=0}^{t-1}(1-\alpha)^\tau S(\bw_{t-\tau},\bw_{t+1-\tau})
\end{align*}
Next, take the magnitude of both sides and use triangle inequality:
\begin{align*}
    \|\hat \epsilon_{t+1}\|&\le (1-\alpha)^t\|\epsilon_1\| +\alpha\left\|\sum_{\tau=0}^{t-1} (1-\alpha)^\tau \epsilon_{t+1-\tau}\right\|+ (1-\alpha)\eta L \sum_{\tau=0}^{t-1}(1-\alpha)^\tau
\end{align*}
Where in the above we have observed that $\hat\epsilon_1=\epsilon_1$. Now take expectation, and use the fact that any two $\epsilon_i$ are uncorrelated and apply Jensen inequality to obtain:
\begin{align*}
    \E\left[\|\hat \epsilon_{t+1}\|\right]&\le (1-\alpha)^t\E\left[\|\epsilon_1\|\right] + \alpha \sqrt{\sum_{\tau=0}^{t-1} (1-\alpha)^{2\tau}\sigma^2}+ (1-\alpha)\eta L\sum_{\tau=0}^{t-1}(1-\alpha)^\tau\\
    &=(1-\alpha)^t\sigma +\frac{ \alpha\sigma }{\sqrt{1-(1-\alpha)^2}}+\frac{\eta L}{\alpha}\\
    &\le (1-\alpha)^t\sigma +\sqrt{\alpha}\sigma +\frac{\eta L}{\alpha}\\
    \E\left[\sum_{t=1}^T \|\hat \epsilon_t\|\right]&\le\frac{\sigma}{\alpha}+ T\sqrt{\alpha} \sigma + \frac{T  \eta L}{\alpha}
\end{align*}
Applying Lemma \ref{thm:onestepnormalized} now yields:
\begin{align*}
    \sum_{t=1}^T \E\left[\|\nabla F(\bw_t)\|\right]&\le \frac{3F(\bw_1)}{\eta} + \frac{3L T\eta}{2}+8\sum_{t=1}^T\|\hat \epsilon_t\|\\
    &\le \frac{3R}{\eta} + \frac{3TL\eta}{2} + \frac{8\sigma}{\alpha} +8T\sqrt{\alpha} \sigma+\frac{8 T L\eta}{\alpha}\\
    &\le \frac{3R}{\eta} + \frac{8\sigma}{\alpha} +8T\sqrt{\alpha} \sigma+\frac{10 T L\eta}{\alpha}
\end{align*}
Where we have used $\alpha\le 1$. Set $\alpha = \min\left(\frac{\sqrt{RL}}{\sigma \sqrt{T}}, 1\right)$ and $\eta = \frac{\sqrt{R\alpha}}{\sqrt{TL}}$. Observe from the setting of $\eta$ that we have:
\begin{align*}
    \sum_{t=1}^T \E\left[\|\nabla F(\bw_t)\|\right]&\le\frac{13 \sqrt{RLT}}{\sqrt{\alpha}} + 8\sqrt{\alpha}\sigma T + \frac{8\sigma}{\alpha}
\end{align*}
Now suppose $\alpha=1$. This implies that $\sigma \le \frac{\sqrt{RL}}{\sqrt{T}}$. Therefore the RHS of the above expression is bounded by $29\sqrt{RLT}$. On the other hand, if $\alpha = \frac{\sqrt{RL}}{\sigma \sqrt{T}}$, the RHS is bounded by
\begin{align*}
    21\sqrt{\sigma}(RL)^{1/4} T^{3/4}  + \frac{8\sigma \sqrt{T}}{\sqrt{RL}}
\end{align*}
Adding these expressions yields the desired result.
\end{proof}

\section{Faster Convergence with Second-Order Smoothness}\label{sec:igt}
Now that we have seen a simpler example of how to analyze the convergence of normalized SGD with momentum in the non-convex setting, we can proceed to consider the more advanced case that $F$ is second-order smooth. Notice that in the proof of Theorem \ref{thm:normalizedsgd}, we made use of the quantity $S(a,b) = \nabla F(a)-\nabla F(b)$, which satisfies $\|S(a,b)\|\le L\|a-b\|$. In this section, we will need a related quantity:
\begin{align*}
Z(a,b) = \nabla F(a) - \nabla F(b) - \nabla^2F(b)(a-b)
\end{align*}
Assuming $F$ satisfies (\ref{eqn:secondsmooth}), Taylor's theorem implies that $Z$ satisfies $\|Z(a,b)\|\le \rho \|a-b\|^2$. The improved dependency on $\|a-b\|$ from linear to quadratic will allow us to achieve a faster convergence rate.

Without further ado, we present our algorithm \night in Algorithm \ref{alg:normalized} below:
\begin{algorithm}
   \caption{Normalized SGD with Implicit Gradient Transport (\night - pronounced ``night'')}
   \label{alg:normalized}
   \begin{algorithmic}
      \STATE{\bfseries Input: } Initial Point $\bw_1$, learning rate $\eta$, momentum parameter $\beta$.
      \STATE Set $\bm_1 = \nabla f(\bx_1, \xi_1)$.
      \STATE Set $\bw_2 = \bw_1 - \eta \frac{\bm_1}{\|\bm_1\|}$.
      \FOR{$t=2\dots T$}
      \STATE Set $\bx_t = \bw_t + \frac{\beta}{1-\beta}(\bw_t - \bw_{t-1})$.
      \STATE Set $\bm_t = \beta_t \bm_{t-1} + (1-\beta_t) \nabla f(\bx_t ,\xi_t)$.
      \STATE Set $\bw_{t+1} = \bw_t - \eta \frac{\bm_t}{\|\bm_t\|}$.
      \ENDFOR
   \end{algorithmic}
\end{algorithm}
The auxiliary variable $\bx_t$ is introduced for notational convenience - it can be recomputed every iteration from $\bw_t$ and $\bw_{t-1}$. Notice that, with $\beta_t = \frac{t}{t+1}$, the update for $\bm_t$ corresponds exactly to the implicit gradient transport update proposed by \cite{arnold2019reducing}. We will instead use the different setting $\beta = 1 - O(T^{-4/7})$. We show that this value of $\beta$, combined with the normalized SGD update, enables faster convergence on second-order smooth non-convex objectives. This significantly extends the prior results of \cite{arnold2019reducing}, which hold only on convex quadratic objectives with constant Hessian, and helps realize the theoretical potential of the implicit gradient transport idea.

\begin{restatable}{Theorem}{thmigt}\label{thm:igt}
Suppose $f$ and $\mathcal{D}$ satisfies the assumptions (\ref{eqn:smoothness}), (\ref{eqn:positive}), (\ref{eqn:variance}), and (\ref{eqn:secondsmooth}). Let $\bw_1$ be an arbitrary starting point, and set $\bm_1 = \nabla f(\bw_1, \xi_1)$. Let $R\ge F(\bw_1)$ and set $\eta=\min\left( \frac{R^{5/7}}{T^{5/7}\rho^{1/7}\sigma^{4/7}}, \sqrt{\frac{R}{TL}}\right)$ and $\alpha=\min\left(\frac{R^{4/7}\rho^{2/7}}{T^{4/7}\sigma^{6/7}}, 1\right)$. Suppose $\bw_1,\dots,\bw_T$ are the iterates produced by Algorithm \ref{alg:normalized} with $\eta_t=\eta$ and $\beta_t = 1-\alpha$ for all $t$. Then the average expected gradient norm satisfies:
\begin{align*}
   \frac{1}{T}\sum_{t=1}^T\E\left[\|\nabla F(\bw_t)\|\right]\le
   \frac{5\sqrt{RL}}{\sqrt{T}}  +\frac{8\sigma^{13/7}}{R^{4/7}\rho^{2/7}T^{3/7}}+ \frac{27R^{2/7}\rho^{1/7}\sigma^{4/7}}{T^{2/7}}
\end{align*}
In particular, the dependence on $T$ is $O(1/T^{2/7})$.
\end{restatable}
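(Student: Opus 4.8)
The plan is to mirror the proof of Theorem~\ref{thm:normalizedsgd}. Since Algorithm~\ref{alg:normalized} performs the normalized update $\bw_{t+1} = \bw_t - \eta\, \bm_t/\|\bm_t\|$, applying Lemma~\ref{thm:onestepnormalized} with $\hat g_t = \bm_t$ immediately gives
\[
\sum_{t=1}^T \|\nabla F(\bw_t)\| \le \frac{3R}{\eta} + \frac{3LT\eta}{2} + 8\sum_{t=1}^T \|\hat\epsilon_t\|, \qquad \hat\epsilon_t := \bm_t - \nabla F(\bw_t),
\]
so everything reduces to controlling $\E\big[\sum_t \|\hat\epsilon_t\|\big]$. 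Writing $\epsilon_t := \nabla f(\bx_t,\xi_t) - \nabla F(\bx_t)$ — which is mean-zero, uncorrelated across $t$, and satisfies $\E\|\epsilon_t\|^2\le\sigma^2$ exactly as before, since $\bx_t$ is determined by $\bw_t,\bw_{t-1}$ before $\xi_t$ is drawn — I would expand $\bm_t = (1-\alpha)\bm_{t-1} + \alpha\nabla f(\bx_t,\xi_t)$ to obtain the error recursion
\[
\hat\epsilon_t = D_t + (1-\alpha)\hat\epsilon_{t-1} + \alpha\epsilon_t, \qquad D_t := (1-\alpha)\nabla F(\bw_{t-1}) + \alpha\nabla F(\bx_t) - \nabla F(\bw_t),
\]
valid for all $t\ge 2$, with $\hat\epsilon_1 = \epsilon_1$.

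The crucial step — the whole point of the implicit-gradient-transport choice of $\bx_t$ — is to show that $D_t$ is a genuinely second-order quantity. Expanding $\nabla F(\bw_{t-1})$ and $\nabla F(\bx_t)$ around $\bw_t$ with the remainder $Z(\cdot,\bw_t)$ (using (\ref{eqn:secondsmooth})), the $\nabla F(\bw_t)$ pieces combine with total weight $(1-\alpha)+\alpha = 1$, and the Hessian pieces appear with weight $(1-\alpha)(\bw_{t-1}-\bw_t) + \alpha(\bx_t - \bw_t)$. Because $\bx_t - \bw_t = \frac{\beta}{1-\beta}(\bw_t - \bw_{t-1}) = \frac{1-\alpha}{\alpha}(\bw_t - \bw_{t-1})$, this weight is exactly $0$, so $D_t = (1-\alpha)Z(\bw_{t-1},\bw_t) + \alpha Z(\bx_t,\bw_t)$. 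Since the updates are normalized, $\|\bw_t - \bw_{t-1}\| = \eta$ and $\|\bx_t - \bw_t\| = \frac{1-\alpha}{\alpha}\eta \le \eta/\alpha$, so $\|Z(a,b)\|\le\rho\|a-b\|^2$ yields $\|D_t\| \le (1-\alpha)\rho\eta^2 + \alpha\rho(\eta/\alpha)^2 \le \rho\eta^2/\alpha$. This is the key improvement over Theorem~\ref{thm:normalizedsgd}, where the analogous drift $S(\bw_{t-1},\bw_t)$ was only $O(L\eta)$: we trade a factor of $\eta$ for a factor of $\rho\eta/\alpha$, and the $1/\alpha$ blowup coming from the large extrapolation distance $\|\bx_t-\bw_t\|$ is precisely what caps the final rate at $T^{-2/7}$.

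From here the computation is bookkeeping essentially identical to Theorem~\ref{thm:normalizedsgd}. Unrolling, $\hat\epsilon_{t+1} = (1-\alpha)^t\epsilon_1 + \sum_{\tau=0}^{t-1}(1-\alpha)^\tau D_{t+1-\tau} + \alpha\sum_{\tau=0}^{t-1}(1-\alpha)^\tau\epsilon_{t+1-\tau}$; taking norms and expectations, bounding the noise sum by $\alpha\sqrt{\sum_\tau(1-\alpha)^{2\tau}\sigma^2}\le\sqrt\alpha\,\sigma$ via uncorrelatedness and Jensen, and the drift sum by $\frac{\rho\eta^2}{\alpha}\cdot\frac1\alpha = \frac{\rho\eta^2}{\alpha^2}$, then summing over $t$ gives $\E\big[\sum_{t=1}^T\|\hat\epsilon_t\|\big] \le \frac{\sigma}{\alpha} + \frac{T\rho\eta^2}{\alpha^2} + T\sqrt\alpha\,\sigma$. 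Substituting into Lemma~\ref{thm:onestepnormalized} and dividing by $T$ gives a master bound of the form $\frac{3R}{T\eta} + \frac{3L\eta}{2} + \frac{8\sigma}{T\alpha} + \frac{8\rho\eta^2}{\alpha^2} + 8\sqrt\alpha\,\sigma$. I would then tune by balancing the leading terms: setting $\frac{R}{T\eta}\asymp\frac{\rho\eta^2}{\alpha^2}\asymp\sqrt\alpha\,\sigma$ forces $\alpha = R^{4/7}\rho^{2/7}/(T^{4/7}\sigma^{6/7})$ and $\eta = R^{5/7}/(T^{5/7}\rho^{1/7}\sigma^{4/7})$, each clipped at the safe value ($\alpha\le 1$, and $\eta\le\sqrt{R/(TL)}$ to control the $L\eta$ term), and a short case analysis over which branch of each $\min$ is active — exactly as at the end of the proof of Theorem~\ref{thm:normalizedsgd} — collects the surviving contributions into the three stated terms $\frac{5\sqrt{RL}}{\sqrt T}$, $\frac{8\sigma^{13/7}}{R^{4/7}\rho^{2/7}T^{3/7}}$, and $\frac{27R^{2/7}\rho^{1/7}\sigma^{4/7}}{T^{2/7}}$. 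The only genuine obstacle is the middle paragraph: spotting and verifying the exact Hessian-term cancellation, and tracking the $1/\alpha$ factors carefully enough that the subsequent tuning actually closes.
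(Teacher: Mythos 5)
Your proposal tracks the paper's proof of Theorem~\ref{thm:igt} essentially step for step: the same error decomposition $\hat\epsilon_t=\bm_t-\nabla F(\bw_t)$, the same use of the quadratic remainder $\|Z(a,b)\|\le\rho\|a-b\|^2$ to exhibit the Hessian cancellation $(1-\alpha)(\bw_{t-1}-\bw_t)+\alpha(\bx_t-\bw_t)=0$ enabled by the extrapolation $\bx_t-\bw_t=\tfrac{1-\alpha}{\alpha}(\bw_t-\bw_{t-1})$, the same unrolling plus Jensen/uncorrelatedness bound for the noise sum, and the same $(\eta,\alpha)$ tuning with clipping and a two-case analysis via Lemma~\ref{thm:onestepnormalized}. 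The only slip is a dropped factor of $2$: since $\|D_t\|\le(1-\alpha)\rho\eta^2+\rho\eta^2/\alpha\le 2\rho\eta^2/\alpha$ rather than $\rho\eta^2/\alpha$, your constants would differ slightly from the stated $27$, but the $O(T^{-2/7})$ rate and the rest of the argument are unaffected.
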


\begin{proof}
Our proof begins very similarly to that of Theorem \ref{thm:normalizedsgd}. We define $\hat \epsilon_t = m_t - \nabla F(\bw_t)$, and $\epsilon_t = \nabla f(\bx_t,\xi_t) - \nabla F(\bx_t)$. Then we have the following recursive formulation:
\begin{align*}
    \bm_{t+1} &= (1-\alpha)(\nabla F(\bw_t) + \hat \epsilon_t)+\alpha\nabla f(\bx_{t+1}, \xi_{t+1})\\
    &=(1-\alpha)(\nabla F(\bw_{t+1})+\nabla^2F(\bw_{t+1})(\bw_t-\bw_{t+1})) + (1-\alpha)(Z(\bw_t,\bw_{t+1})+\hat\epsilon_t)+\alpha\nabla F(\bw_{t+1})\\ 
    &\quad+\alpha\left(\frac{1-\alpha}{\alpha}\nabla^2F(\bw_{t+1})(\bw_{t+1}-\bw_t)\right)+\alpha(Z(\bx_{t+1}, \bw_{t+1})+\epsilon_{t+1})\\
    &=\nabla F(\bw_{t+1}) + (1-\alpha)Z(\bw_t, \bw_{t+1}) + \alpha Z(\bx_{t+1}, \bw_{t+1}) + (1-\alpha)\hat \epsilon_t + \alpha\epsilon_{t+1}\\
    \hat\epsilon_{t+1}&=(1-\alpha)\hat \epsilon_t+ (1-\alpha)Z(\bw_t, \bw_{t+1}) + \alpha Z(\bx_{t+1}, \bw_{t+1})  + \alpha\epsilon_{t+1}
\end{align*}
Here we have already used the key insight of implicit gradient transport, as well as our second-order smoothness assumption. The carefully constructed equation for $\bx_t$ is such that the $\nabla^2 F(\bw_{t+1})$ terms cancel out, and we keep track of the error introduced by having a non-constant Hessian in the $Z$ function. Next, we unroll the recursion to obtain:
\begin{align*}
    \hat \epsilon_{t+1} &= (1-\alpha)^t \hat \epsilon_1 + \alpha\sum_{\tau=0}^{t-1}(1-\alpha)^\tau \epsilon_{t+1-\tau} + (1-\alpha)\sum_{\tau=0}^{t-1} (1-\alpha)^\tau Z(\bw_{t-\tau}, \bw_{t+1-\tau}) + \alpha\sum_{\tau=0}^{t-1} (1-\alpha)^\tau Z(\bx_{t+1}, \bw_{t+1})
\end{align*}
Now, recall that $Z(a,b)\le \rho\|a-b\|$. This implies:
\begin{align*}
    \alpha \|Z(\bx_t,\bw_t)\|&\le \rho \frac{(1-\alpha)^2\eta^2}{\alpha}\le \rho \frac{(1-\alpha)\eta^2}{\alpha}\\
    (1-\alpha)\|Z(\bw_{t}, \bw_{t+1})\|&\le(1-\alpha)\rho \eta^2\le \rho \frac{(1-\alpha)\eta^2}{\alpha}
\end{align*}
Now just as in the proof of Theorem \ref{thm:normalizedsgd}, we take magnitudes, use triangle inequality, and take expectations, but this time we use the bounds on $Z$:
\begin{align*}
    \E[\|\hat\epsilon_{t+1}\|]&\le (1-\alpha)^t\sigma + \alpha \sqrt{\sum_{\tau=0}^{t-1}(1-\alpha)^{2\tau}\sigma^2}+2\frac{\eta^2\rho}{\alpha}\sum_{\tau=0}^{t-1}(1-\alpha)^{\tau+1}\\
    &\le (1-\alpha)^t\sigma + \frac{\alpha\sigma}{\sqrt{1-(1-\alpha)^2}} + 2\frac{\eta^2\rho(1-\alpha)}{\alpha^2}\\
    &\le (1-\alpha)^t\sigma + \sqrt{\alpha}\sigma + 2\frac{\eta^2\rho(1-\alpha)}{\alpha^2}
\end{align*}
Where we have used $\hat \epsilon_1 = \epsilon_1$ in the first line. Now sum over $t$ to obtain:
\begin{align*}
    \sum_{t=1}^T \E[\|\hat \epsilon_t\|]&\le \frac{\sigma(1-\alpha^{T})}{\alpha} + T\sqrt{\alpha}\sigma + 2\frac{T\eta^2 \rho(1-\alpha)}{\alpha^2}
\end{align*}
Next, we apply Lemma \ref{thm:onestepnormalized}:
\begin{align*}
    \sum_{t=1}^T \E[\|\nabla F(\bw_t)\|]&\le \frac{3F(\bw_1)}{\eta} + \frac{3L T\eta}{2}+\frac{8\sigma(1-\alpha^{T})}{\alpha}+8T\sqrt{\alpha}\sigma +16\frac{T\eta^2 \rho(1-\alpha)}{\alpha^2}
\end{align*}
Now set $\eta=\min\left( \frac{R^{5/7}}{T^{5/7}\rho^{1/7}\sigma^{4/7}}, \sqrt{\frac{R}{TL}}\right)$, so that we have:
\begin{align*}
    \frac{3F(\bw_1)}{\eta} + \frac{3L T\eta}{2}&\le 5\sqrt{RLT} + 3R^{2/7}\rho^{1/7}\sigma^{4/7}T^{5/7}
\end{align*}

Further, set $\alpha = \min\left(\frac{R^{4/7}\rho^{2/7}}{T^{4/7}\sigma^{6/7}}, 1\right)$. Notice that in all cases we have $T\sqrt{\alpha}\sigma \le R^{2/7}\rho^{1/7}\sigma^{4/7}T^{5/7}$, so we can conclude,
\begin{align}
    \sum_{t=1}^T \E[\|\nabla F(\bw_t)\|]&\le 5\sqrt{RLT} + 11R^{2/7}\rho^{1/7}\sigma^{4/7}T^{5/7}+\frac{8\sigma(1-\alpha^{T})}{\alpha} +16\frac{T\eta^2 \rho(1-\alpha)}{\alpha^2}\label{eqn:midway}
\end{align}
Let us consider the case that $\alpha=1$. In this case, the last two terms in (\ref{eqn:midway}) are zero, so we have:
\begin{align}
    \sum_{t=1}^T \E[\|\nabla F(\bw_t)\|]&\le 5\sqrt{RLT} + 11R^{2/7}\rho^{1/7}\sigma^{4/7}T^{5/7}\label{eqn:case1}
\end{align}
Next, suppose $\alpha=\frac{R^{4/7}\rho^{2/7}}{T^{4/7}\sigma^{6/7}}$. Then instead we use the fact that $\eta= \frac{R^{5/7}}{T^{5/7}\rho^{1/7}\sigma^{4/7}}$ to refine (\ref{eqn:midway}) as:
\begin{align}
    \sum_{t=1}^T \E[\|\nabla F(\bw_t)\|]&\le 5\sqrt{RLT} + 11R^{2/7}\rho^{1/7}\sigma^{4/7}T^{5/7}+\frac{8\sigma}{\alpha} +16\frac{T\eta^2 \rho}{\alpha^2}\nonumber\\
    &5\sqrt{RLT} + 27R^{2/7}\rho^{1/7}\sigma^{4/7}T^{5/7}+\frac{8\sigma^{13/7}T^{4/7}}{R^{4/7}\rho^{2/7}}\label{eqn:case2}
\end{align}
Taking the maximum of (\ref{eqn:case1}) and (\ref{eqn:case2}) yields the desired convergence guarantee. 
\end{proof}

\section{Adaptive Algorithm}\label{sec:adaptive}
In the previous sections, we considered the case of \emph{constant} learning rate $\eta$ and momentum $\beta$ parameters. We showed that with appropriate settings, we can obtain fast convergence rates. In this section, we go further and consider varying $\eta$ and $\beta$ values. In particular, we will provide an \emph{adaptive} schedule in which $\eta$ and $\beta$ are adjusted on-the-fly based on the observed gradients. This adaptive schedule yields a convergence guarantee that automatically improves when $\sigma$ is small, \emph{without knowing $\sigma$}. In order to do this, we will need to sample \emph{two} gradients at each point $\bx_t$. The two gradient samples are independent, so this still fits within our stochastic gradient oracle model of computation. We denote the second independent gradient estimate evaluated at the point $\bx_t$ as $\nabla f(\bx_t, \xi_t')$. Finally, we will add another assumption: the function $F$ (or at least the values of $F(\bx_t)$) should be bounded above by some constant $M$. With this notation, our adaptive algorithm is presented in Algorithm \ref{alg:adaptive} and analyzed in Theorem \ref{thm:adaptive} below.

\begin{algorithm}
   \caption{Adaptive Normalized SGD with Momentum}
   \label{alg:adaptive}
\begin{algorithmic}
   \STATE{\bfseries Input: } Initial point $\bw_1$, lipschitz bound $\G$.
   \STATE Set $C = \sqrt{\frac{7}{26\G^{6/7}}}$ and $D = C^{-14/3}$
   \STATE Set $\bw_{0} = \bw_1$.
   \STATE Set $\bm_{0}= 0$.
   \STATE Set $G_1=3\G^2+D$.
   \STATE Set $\eta_0 = \frac{C}{D^{2/7}}$.
   \FOR{$t=1\dots T$}
   \STATE Set $\eta_t = \frac{C}{(G_t^2 (t+1)^3)^{1/7}}$.
   \STATE Set $\alpha_t = \frac{1}{t\eta_{t-1}^2 G_{t-1}}$.
   \STATE Set $\beta_t = 1-\alpha_t$.
   \STATE Set $\bx_t=\bw_t + \frac{\beta_t}{1-\beta_t}(\bw_t - \bw_{t-1})$.
   \STATE Set $\bm_t = \beta_t \bm_{t-1} + (1-\beta_t) \nabla f(\bx_t, \xi_t)$.
   \STATE Set $G_{t+1} = G_t + \|\nabla f(\bx_t, \xi_t) - \nabla f(\bx_t, \xi_t')\|^2 + \G^2((t+1)^{1/4}-t^{1/4})$.
   \STATE Set $\bw_{t+1} = \bw_t - \eta_t \frac{\bm_t}{\|\bm_t\|}$.
   \ENDFOR
\end{algorithmic}
\end{algorithm}

\begin{restatable}{Theorem}{thmadaptive}\label{thm:adaptive}
Suppose that $f$ and $\mathcal{D}$ satisfies (\ref{eqn:smoothness}), (\ref{eqn:positive}), (\ref{eqn:variance}), and (\ref{eqn:secondsmooth}). Further, suppose $\|\nabla f(\bw, \xi)\|\le \G$ for all $\bw$ with probability 1, and that $F(\bw)\le M$ for all $\bw$ for some $M$. Then Algorithm \ref{alg:adaptive} guarantees:
\begin{align*}
    \frac{1}{T}\sum_{t=1}^T \E\left[\|\nabla F(\bw_t)\|\right]&\le \tilde O\left(\frac{1}{\sqrt{T}} + \frac{\sigma^{4/7}}{T^{2/7}}\right)
\end{align*}
where the $\tilde O$ notation hides constants that depend on $R$, $G$, and $M$ and a factor of $\log(T)$.
\end{restatable}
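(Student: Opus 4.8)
The plan is to run the same argument as in the proof of Theorem~\ref{thm:igt}, but now carrying $\eta_t$, $\alpha_t=1-\beta_t$ and $G_t$ as random, data-dependent quantities. Let $\mathcal{F}_{t-1}$ be the $\sigma$-field generated by $\xi_1,\xi_1',\dots,\xi_{t-1},\xi_{t-1}'$; then $\bw_t,\bx_t,\eta_{t-1},\alpha_t,G_t,\eta_t$ are all $\mathcal{F}_{t-1}$-measurable while $\E[\nabla f(\bx_t,\xi_t)\mid\mathcal{F}_{t-1}]=\nabla F(\bx_t)$. Writing $\hat\epsilon_t=\bm_t-\nabla F(\bw_t)$ and $\epsilon_t=\nabla f(\bx_t,\xi_t)-\nabla F(\bx_t)$, the interpolation point $\bx_t=\bw_t+\tfrac{\beta_t}{1-\beta_t}(\bw_t-\bw_{t-1})$ is built exactly so the $\nabla^2F(\bw_t)$ terms cancel, and the same manipulation as in Theorem~\ref{thm:igt} gives
\begin{align*}
\hat\epsilon_t=(1-\alpha_t)\hat\epsilon_{t-1}+(1-\alpha_t)Z(\bw_{t-1},\bw_t)+\alpha_tZ(\bx_t,\bw_t)+\alpha_t\epsilon_t ,
\end{align*}
with $\|(1-\alpha_t)Z(\bw_{t-1},\bw_t)+\alpha_tZ(\bx_t,\bw_t)\|\le \rho\eta_{t-1}^2/\alpha_t$ by $\|Z(a,b)\|\le\rho\|a-b\|^2$, $\|\bw_t-\bw_{t-1}\|=\eta_{t-1}$, $\|\bx_t-\bw_t\|=\tfrac{1-\alpha_t}{\alpha_t}\eta_{t-1}$.

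Next I would reduce the theorem to an error bound. The deterministic increment $\G^2((t+1)^{1/4}-t^{1/4})$ and the a.s.\ bound $\|\nabla f(\bx_s,\xi_s)-\nabla f(\bx_s,\xi_s')\|^2\le4\G^2$ give $\G^2 t^{1/4}\le G_t\le O(\G^2 t)$ on every sample path (so $\alpha_t\le1$, using $D=C^{-14/3}$, and $\eta_t$ is non-increasing), while independence of the two samples gives $\E[\|\nabla f(\bx_s,\xi_s)-\nabla f(\bx_s,\xi_s')\|^2\mid\mathcal{F}_{s-1}]\le2\sigma^2$, hence $\E[G_t]\le O(\G^2+\sigma^2 t+\G^2 t^{1/4})$; concavity of $x\mapsto x^{2/7}$ and $x\mapsto x^{3/7}$ then yields $\E[1/\eta_T]=\tilde O(\sigma^{4/7}T^{5/7}+T^{1/2})$, $\E[\sum_t\eta_t]=\tilde O(\sqrt T)$ and $\E[\sum_t 1/\alpha_{t+1}]=\tilde O(\sigma^{6/7}T^{11/7}+T^{5/4})$. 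Dividing the one-step bound of Lemma~\ref{thm:onestepnormalized} by $\eta_t$, summing, and using summation by parts together with $F(\bw_t)\le M$ to pay for the nonnegative increments of $1/\eta_t$ gives
\begin{align*}
\sum_{t=1}^T\|\nabla F(\bw_t)\|\le \frac{3R}{\eta_1}+\frac{3M}{\eta_T}+8\sum_{t=1}^T\|\hat\epsilon_t\|+\frac{3L}{2}\sum_{t=1}^T\eta_t ,
\end{align*}
so, after taking expectations and dividing by $T$, it remains to show $\E[\sum_t\|\hat\epsilon_t\|]=\tilde O(\sqrt T+\sigma^{4/7}T^{5/7})$.

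For that I would \emph{not} unroll the recursion as in Theorem~\ref{thm:igt}: the coefficient of $\epsilon_j$ in the unrolled form is $\alpha_j\prod_{s>j}(1-\alpha_s)$, which depends on \emph{future}, random $\alpha_s$ and is not predictable, so the noise terms no longer cancel in expectation. Instead I would stay at the one-step level: expanding $\|\hat\epsilon_t\|^2=\|(1-\alpha_t)\hat\epsilon_{t-1}+r_t\|^2$ with $r_t=(1-\alpha_t)Z(\bw_{t-1},\bw_t)+\alpha_tZ(\bx_t,\bw_t)+\alpha_t\epsilon_t$, telescoping $\|\hat\epsilon_t\|^2-\|\hat\epsilon_{t-1}\|^2$, using $1-(1-\alpha_t)^2\ge\alpha_t$ and the crucial fact that every cross term $\E[c_t\langle\hat\epsilon_{t-1},\epsilon_t\rangle]$ vanishes (because $c_t$ and $\hat\epsilon_{t-1}$ are $\mathcal{F}_{t-1}$-measurable and $\E[\epsilon_t\mid\mathcal{F}_{t-1}]=0$), one gets
\begin{align*}
\sum_{t=2}^T\E[\alpha_t\|\hat\epsilon_{t-1}\|^2]\le O\!\left(\E[\|\hat\epsilon_1\|^2]+\sum_{t=2}^T\E\!\left[\tfrac{\rho^2\eta_{t-1}^4}{\alpha_t^3}\right]+\sum_{t=2}^T\E[\alpha_t^2\|\epsilon_t\|^2]\right) ,
\end{align*}
and then Cauchy--Schwarz, $\sum_t\|\hat\epsilon_t\|\le\sqrt{\sum_t\alpha_{t+1}^{-1}}\sqrt{\sum_t\alpha_{t+1}\|\hat\epsilon_t\|^2}$, closes the loop via the envelope bound on $\E[\sum_t 1/\alpha_{t+1}]$. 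The \textbf{main obstacle} is making the right-hand side above genuinely adaptive: substituting $\alpha_t=(C^2 t^{1/7}G_{t-1}^{3/7})^{-1}$, $\eta_{t-1}=C(G_{t-1}^2t^3)^{-1/7}$ one finds $\tfrac{\rho^2\eta_{t-1}^4}{\alpha_t^3}=\rho^2C^{10}G_{t-1}^{1/7}t^{-9/7}$ (summable to a constant) while $\alpha_t^2\|\epsilon_t\|^2$ has conditional expectation $\tfrac{\sigma_t^2}{C^4t^{2/7}G_{t-1}^{6/7}}$ with $\sigma_t^2=\E[\|\epsilon_t\|^2\mid\mathcal{F}_{t-1}]\le\sigma^2$, and bounding $\sum_t\E[\alpha_t^2\|\epsilon_t\|^2]$ by the target order \emph{cannot} be done with the crude pathwise bound $G_{t-1}\ge\G^2 t^{1/4}$ — that loses all adaptivity and yields a bound growing in $T$. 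One instead needs an AdaGrad-style accumulator estimate (morally $\sum_t g_t/(\sum_{s\le t}g_s)^{6/7}\lesssim(\sum_t g_t)^{1/7}$) applied to the $G_t$ recursion, reconciled with the extra $\G^2 t^{1/4}$ drift inside $G_t$ and with the heterogeneity of the local variances $\sigma_t^2$; this is where essentially all of the work, and the forcing of the constants $C=\sqrt{7/(26\G^{6/7})}$, $D=C^{-14/3}$ and the exponents $2/7$, $3$, $1/4$, lives.

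Finally I would assemble the pieces: plugging $\E[\sum_t\|\hat\epsilon_t\|]=\tilde O(\sqrt T+\sigma^{4/7}T^{5/7})$ together with $\E[1/\eta_T]$, $\E[\sum_t\eta_t]$ and the constant $1/\eta_1$ into the displayed reduction and dividing by $T$ collapses everything to $\tilde O(1/\sqrt T+\sigma^{4/7}/T^{2/7})$, with the hidden factors depending only on $R$, $\G$, $M$ and $\log T$, as claimed.
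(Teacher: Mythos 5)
Your setup is correct and matches the paper: the recursion $\hat\epsilon_t=(1-\alpha_t)\hat\epsilon_{t-1}+(1-\alpha_t)Z(\bw_{t-1},\bw_t)+\alpha_t Z(\bx_t,\bw_t)+\alpha_t\epsilon_t$, the bound $\rho\eta_{t-1}^2/\alpha_t$ on the $Z$ terms, the observation that one must work at the one-step level because the unrolled coefficients are not predictable, and the reduction via Lemma \ref{thm:onestepnormalized} and $F\le M$ are all right. But there is a genuine gap: the entire theorem rests on $\E[\sum_t\|\hat\epsilon_t\|]=\tilde O(\sqrt T+\sigma^{4/7}T^{5/7})$, and you explicitly defer "essentially all of the work" — the AdaGrad-style accumulator estimate reconciled with the $\G^2 t^{1/4}$ drift and the choice of $C,D$ — without carrying it out. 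That deferred step is not routine, and the specific route you sketch does not obviously close: telescoping the \emph{unweighted} $\|\hat\epsilon_t\|^2$ yields control of $\sum_t\alpha_t\|\hat\epsilon_{t-1}\|^2$, and pairing it via Cauchy--Schwarz with $\sum_t\alpha_{t+1}^{-1}=\tilde O(\sigma^{6/7}T^{11/7}+T^{5/4})$ gives $\sqrt{\sigma^{6/7}T^{11/7}}\cdot\sqrt{\sum_t\alpha_t\|\hat\epsilon_{t-1}\|^2}$; even a quick check with $G_{t-1}\ge\G^2t^{1/4}$ shows the noise contribution to $\sum_t\alpha_t\|\hat\epsilon_{t-1}\|^2$ is of order $\sigma^2\sqrt T$, making the product superlinear in $T$. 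So the weights in your split are mismatched, not merely unoptimized.

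The paper resolves exactly this by weighting the potential: it tracks $K_t\|\hat\epsilon_t\|^2$ with $K_t=\frac{1}{\alpha_t^2\eta_{t-1}G_t}$, chosen so that three things happen simultaneously. First, the noise term becomes $K_t\alpha_t^2\|\epsilon_t\|^2=\frac{\|\epsilon_t\|^2}{G_t\eta_{t-1}}$, and — this is the entire purpose of the second sample $\xi_t'$ — one replaces $\E[\|\epsilon_t\|^2]$ by $\E[\|\nabla f(\bx_t,\xi_t)-\nabla f(\bx_t,\xi_t')\|^2]$, which is precisely (part of) the increment $\delta_{t+1}=G_{t+1}-G_t$, so the sum telescopes logarithmically as $\frac{1}{\eta_T}\log(G_{T+1}/D)$; this is the adaptive accumulator estimate you flagged as missing. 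Second, the difference $K_t(1-\alpha_t)-K_{t-1}$ is shown (this is where $C=\sqrt{7/(26\G^{6/7})}$, $D=C^{-14/3}$, and the $\G^2 t^{1/4}$ drift are forced) to leave a negative drift $-\frac{\|\hat\epsilon_{t-1}\|^2}{2\alpha_tG_t\eta_{t-1}}$. Third, that drift is what absorbs $8\sum_t\|\hat\epsilon_t\|$: Cauchy--Schwarz against the \emph{complementary} weight $2\alpha_{t+1}G_{t+1}\eta_t=\frac{2}{(t+1)\eta_t}$ costs only $O(\log(T)/\eta_T)$, and $\E[\eta_T^{-1}]=O(\sqrt T+\sigma^{4/7}T^{5/7})$ by Jensen. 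Your proposal identifies the right crux but neither supplies the weighting that makes it work nor the two-sample identity that converts the noise into accumulator increments, so as written it does not constitute a proof.
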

We defer the proof to Appendix \ref{sec:proof}.

\section{Experiments}\label{sec:experimental}
Now, we turn to experimental evaluation of the proposed method \night on two popular large-scale deep learning benchmarks: BERT pretraining and ResNet-50. First we explain the setup and choice of hyperparameters for our method, and then elucidate both tasks and the details on the baseline used for each task.

\subsection{Setup}
We implemented our algorithm in the Tensorflow framework. For simplicity, we implemented a per-layer version of our algorithm, normalizing the gradients for each layer in the network, rather than normalizing the full gradient.
Taking our cue from defaults from previous empirical literature on momentum, we set the $\beta$ parameter to 0.9 for \night for both BERT and ResNet-50. For BERT, we stick with the learning rate schedule used for Adam in \cite{devlin-etal-2019-bert} i.e linear warmup and polynomial decay of $\eta_{t} = \eta_{0}*(1 - \frac{t}{T})$. Whereas for ResNet-50, we found that linear warmup and polynomical decay of $\eta_{t} = \eta_{0}*(1 - \frac{t}{T})^2$ worked best \citep{you2017large}. We performed a grid search on base learning rate $\eta_{0} \in [10^{-5}, 10^{-4}, 10^{-3}, 10^{-2}, 10^{-1}, 10^{0}]$ for both the tasks. In our implementation, we also scale the learning rate with the norm of the weights for each layer similar to \cite{you2017large}. We did not normalize gradients for bias, batch normalization and layer normalization parameters, and we scaled their learning rates by a factor of 1000. All our experiments were conducted on a TPUv3 architecture.

\subsection{BERT pretraining}
We replicate most of the setup created by \cite{devlin-etal-2019-bert} for our BERT baseline. Our text corpus is a concatenation of Wikipedia and BooksCorpus, with a total of roughly 3.3B words. We use examples of sequence length 512 with token masking rate of 0.15 and train the BERT-Base model with Masked Language Modeling target accuracy of 70.0\% for 500k steps. Fig \ref{fig:experiments} compares masked language modeling accuracy resulting with our method vs the baseline. We set \night base learning rate $\eta_{0}$ to 0.001 and batch size to 256. We replicate the hyperparameters used for our baseline method Adam exactly from \cite{devlin-etal-2019-bert} i.e. step size $\alpha=0.0001$, $\beta_{1}=0.9$, $\beta_{2}=0.99$.

\subsection{Resnet-50}
We train the ImageNet dataset with Resnet-50 architecture and compare our method against the commonly used SGD optimizer with momentum in Fig \ref{fig:experiments}. Others have observed that Adam/Adagrad fails to achieve accuracy attained by SGD on this task \citep{you2019reducing, Anil2019MemoryEA}. For our experiments, we set the batch size to 1024 and train the models for 90 epochs. We set the base learning rate $\eta_{0}$ for \night to 0.01. For the SGD baseline, we stick with the common practice of employing gradual warmup for 5 epochs up to a base learning rate of 0.4 and then divide it by 10 at 30-th, 60-th and 80-th epoch, as done in \cite{goyal2017accurate, He2015DeepRL}.


\begin{figure*}[ht!]
\centering
  \begin{subfigure}
     \centering
     \includegraphics[width=0.45\textwidth]{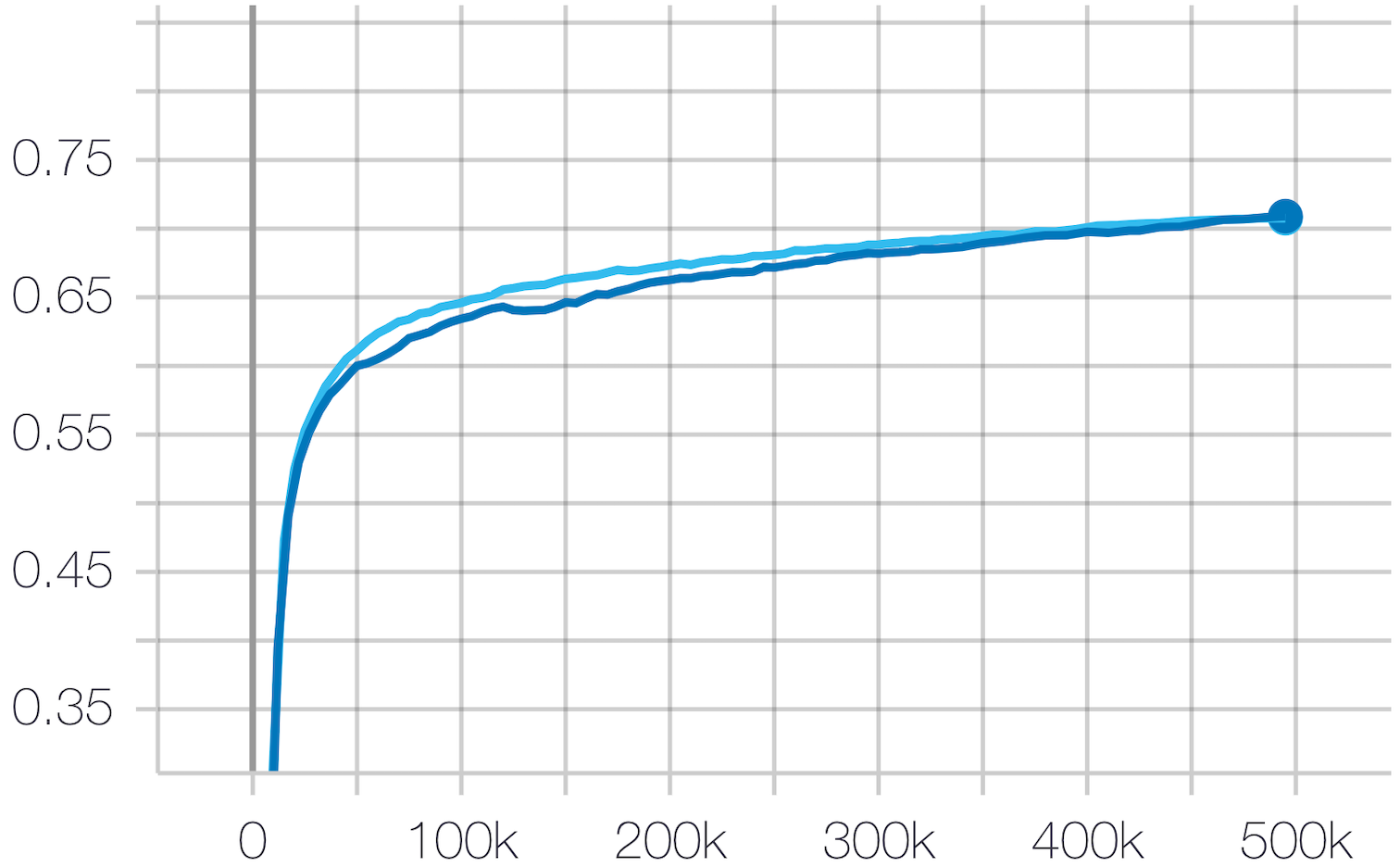}
     \label{fig:nigt_vs_adam}
    \qquad
  \end{subfigure}
      \begin{subfigure}
     \centering
     \includegraphics[width=0.45\textwidth]{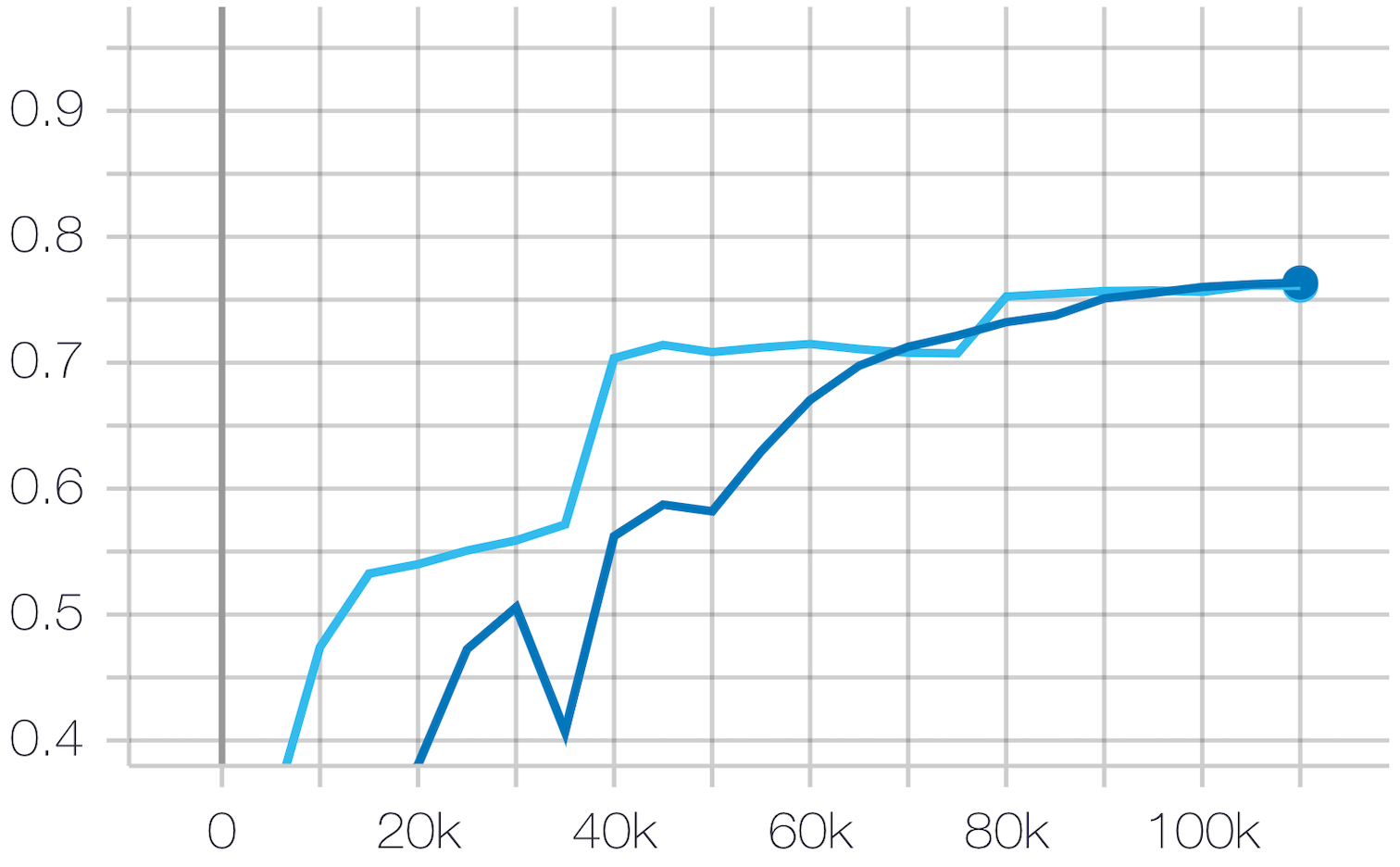}
     \label{fig:nigt_vs_momentum_resnet}
  \end{subfigure}
\caption{ (a) Masked language modeling validation accuracy comparisons for our method compared to Adam. Our method \night (dark blue), in the end, fares slightly better with 70.91 vs 70.76 for Adam (light blue). (b) Top-1 validation accuracy attained by Resnet-50 architecture over Imagenet dataset. Accuracy at the end of 90 epochs: \night (dark blue) - 76.37\% and SGD (light blue) - 76.2\%.
    }
  \label{fig:experiments}
\end{figure*}

\section{Conclusion and Future Work}\label{sec:conclusion}

We have presented a new analysis of the effects of momentum on normalized SGD, and showed that when the objective is second-order smooth, normalized SGD with momentum can find an $\epsilon$-critical point in $O(1/\epsilon^{3.5})$ iterations, matching the best-known rates in a dimension-free manner and surprisingly straightforward manner. Further, we provided an adaptive version of our algorithm whose convergence guarantee automatically improves when the underlying distribution has low variance. Finally, we provided an empirical evaluation showing that our algorithm matches the performance of the methods used to achieve state-of-the-art in training ResNet-50 on ImageNet and on BERT pretraining. Notably, these two tasks are often tackled with different optimizers (SGD and Adam respectively). Our method is one of the select few \citep{you2019reducing, Anil2019MemoryEA} which performs well on both simultaneously. Also, since \night does not maintain any second order statistic, it is significantly more memory efficient compared to popular optimizers such as Adam or LAMB.


Our results provide some intuition that can help explain the practical success of momentum in stochastic non-convex optimization. Prior analyses of momentum have failed to demonstrate significant theoretical benefits, but our results show that if one is willing to posit second-order smoothness, momentum may play a role in accelerating convergence. However, there are several open problems remaining. First, we suspect that the upper bound $M$ imposed on $F$ in our adaptive analysis is unnecessary, and also that the use of an extra gradient sample per iteration can be removed. Second, we note that in the noise-free case with second-order smoothness, the best-known dimension-free convergence rate is actually $O(1/\epsilon^{1.75})$ \citep{carmon2017convex} rather than the $O(1/\epsilon^2)$ achieved by gradient descent. Is it possible to design an adaptive algorithm that interpolates between this rate and $O(1/\epsilon^{3.5})$ in the noisy setting?

Finally, our implementation employs a few extra learning rate heuristics popular in practice, notably linear warm-up and polynomial decay \citep{devlin-etal-2019-bert}, as well as scaling the learning rate by the norm of the weights \citep{you2017large}. We found these heuristics to be better-performing than our adaptive algorithm, despite their lack of theoretical motivation. Understanding the principles underlying the success of these heuristics poses an interesting orthogonal question to understanding momentum, and we hope that our work inspires future investigation into these areas.

{\small
\bibliography{all}
\bibliographystyle{icml2020}
}


\clearpage
\appendix

\section{Proof of Theorem \ref{thm:adaptive}}\label{sec:proof}
In this section we provide the missing proof of Theorem \ref{thm:adaptive}, restated below:
\thmadaptive*

\begin{proof}
Notice from the definition of $\alpha$ that we always have
\begin{align*}
    \alpha_t &= \frac{1}{C^2t^{1/7}G_{t-1}^{3/7}}\\
    &\le \frac{1}{C^2D^{3/7}}\\
    &\le 1
\end{align*}
where we defined $G_0=D$. Thus $\alpha_t\le 1$ always.

We begin with the now-familiar definitions:
\begin{align*}
    \epsilon_t &= \nabla f(\bx_t, \xi_t) - \nabla F(\bx_t)\\
    \epsilon_t' &= \nabla f(\bx_t, \xi_t') - \nabla F(\bx_t)\\
    \hat \epsilon_t &= \bm_t - \nabla F(\bw_t)
\end{align*}
Notice that $\E[\langle \epsilon_i, \epsilon_j\rangle]=\sigma^2\delta_{i,j}$. Now we write the recursive formulation for $\hat\epsilon_{t+1}$:

\begin{align*}
    \bm_{t} &= (1-\alpha_t) \bm_{t-1} + \alpha_t \nabla f(\bx_t, \xi_t)\\
    &= (1-\alpha_t)(\nabla F(\bw_{t-1}) + \hat\epsilon_{t-1}) + \alpha_t (\nabla F(\bw_t) + \epsilon_t)\\
    &=\nabla F(\bw_t) + (1-\alpha_t)Z(\bw_{t-1}, \bw_t)+ \alpha_t Z(\bx_t, \bw_t) + (1-\alpha_t)\hat \epsilon_{t-1} + \alpha \epsilon_t\\
    \hat \epsilon_t&= (1-\alpha_t)Z(\bw_{t-1}, \bw_t) + \alpha_t Z(\bx_t, \bw_t)+ (1-\alpha_t)\hat \epsilon_{t-1} + \alpha_t \epsilon_t
\end{align*}
Unfortunately, it is no longer clear how to unroll this recurrence to solve for $\hat \epsilon_t$ in a tractable manner. Instead, we will take a different path, inspired by the potential function analysis of \cite{cutkosky2019momentum}. Start with the observations:
\begin{align*}
    \alpha_t \|Z(\bx_t,\bw_t)\|&\le \rho \frac{(1-\alpha_t)^2\eta_{t-1}^2}{\alpha_t}\le \rho \frac{(1-\alpha_t)\eta_{t-1}^2}{\alpha_t}\\
    (1-\alpha_t)\|Z(\bw_{t-1}, \bw_t)\|&\le(1-\alpha_t)\rho \eta_{t-1}^2\le \rho \frac{(1-\alpha_t)\eta_{t-1}^2}{\alpha_t}
\end{align*}

Define $K_t = \frac{1}{\alpha_t^2\eta_{t-1} G_t}$. Then we use $(a+b)^2\le (1+1/x)a^2+(1+x)b^2$ for all $x$ and the fact that $\epsilon_t$ is uncorrelated with anything that does not depend on $\xi_t$ to obtain:
\begin{align*}
    \E[K_t \|\hat\epsilon_t\|^2]&\le \E\left[ K_t(1+1/x)4\rho^2\frac{(1-\alpha_t)^2\eta_{t-1}^4}{\alpha_t^2}+ K_t(1+x)(1-\alpha_t)^2\|\epsilon_{t-1}\|^2+K_t\alpha_t^2 \|\epsilon_t\|^2\right]\\
    &\le \E\left[ K_t\rho^2\frac{8(1-\alpha_t)^2\eta_{t-1}^4}{\alpha_t^3}+ K_t(1-\alpha_t)\|\epsilon_{t-1}\|^2+K_t\alpha_t^2 \|\epsilon_t\|^2\right]
\end{align*}
where in the last inequality we have set $x=\alpha_t$. This implies:
\begin{align*}
    &\E[K_t\|\hat \epsilon_t\|^2 - K_{t-1}\|\hat \epsilon_{t-1}\|^2]\le  \E\left[ K_t\rho^2\frac{8(1-\alpha_t)^2\eta_{t-1}^4}{\alpha_t^3}+ (K_t(1-\alpha_t)-K_{t-1})\|\epsilon_{t-1}\|^2+K_t\alpha_t^2 \|\epsilon_t\|^2\right]\\
    &\le \E\left[ \rho^2\frac{8(1-\alpha_t)^2\eta_{t-1}^3}{\alpha_t^5G_t}+\frac{\|\epsilon_t\|^2}{G_t \eta_{t-1}}- \left(\frac{1}{\alpha_{t-1}^2G_{t-1}\eta_{t-2}} - \frac{1}{\alpha_t^2G_t\eta_{t-1}}+\frac{1}{\alpha_tG_t\eta_{t-1}}\right)\|\epsilon_{t-1}\|^2\right]
\end{align*}
Let $\delta_t = G_t - G_{t-1}$. Then we have $\E[\epsilon_t/\sqrt{G_t\eta_{t-1}}]=0= \E[\epsilon'_t/\sqrt{G_t\eta_{t-1}}]$. Therefore we have
\begin{align*}
    \E\left[\frac{\|\epsilon_t\|^2}{G_t}\right]&\le \E\left[\frac{\|\nabla f(\bx_t,\xi_t)\|^2}{G_t}\right]\\
    \E\left[\frac{\|\epsilon_t\|^2}{G_t}\right]&\le \E\left[\frac{\|\nabla f(\bx_t,\xi_t)-\nabla f(\bx_t, \xi_t')\|^2}{G_t}\right]
\end{align*}
so that we have
\begin{align*}
    \E\left[\frac{\|\epsilon_t\|^2}{G_t}\right]&\le\E\left[\frac{\delta_{t+1}}{G_t}\right]
\end{align*}
Now, observe that $\delta_{t+1}\le 2\G^2$, so that we have
\begin{align*}
    \E\left[\frac{\delta_{t+1}}{G_t\eta_{t-1}}\right]&= \E\left[\frac{\delta_{t+1}/\eta_{t-1}}{D+3\G^2+\sum_{\tau=1}^t \delta_t}\right]\\
    &\le \E\left[\frac{1}{\eta_T}\frac{\delta_{t+1}}{D+2\G^2+\sum_{\tau=1}^{t+1} \delta_t}\right]\\
    &\le \E\left[\frac{1}{\eta_T}\frac{\delta_{t+1}}{D+\sum_{\tau=1}^{t+1} \delta_t}\right]\\
    \sum_{t=2}^{T+1} \E\left[\frac{\|\epsilon_t\|^2}{G_t\eta_{t-1}}\right]&\le \E\left[\frac{1}{\eta_T}\log\left(\frac{G_{T+1}}{D}\right)\right]
\end{align*}
where we have used the fact that $\eta_t$ is non-increasing.

Next, we tackle $\rho^2\frac{8(1-\alpha_t)^2\eta_{t-1}^3}{\alpha_t^5G_t}$. We have
\begin{align*}
    \sum_{t=2}^{T+1}\E\left[ \frac{ \eta_{t-1}^3}{\alpha_t^5 G_t}\right]&\le \sum_{t=2}^{T+1}\E\left[ \frac{ \eta_{t-1}^4}{\eta_T\alpha_t^5 G_{t-1}}\right]\\
    &\le \sum_{t=2}^{T+1}\E\left[ \frac{ C^{14}}{t\eta_T}\right]\\
    &\le C^{14}\log(T+2)\E[\eta_T^{-1}]
\end{align*}

Now, finally we turn to bounding $-\left(\frac{1}{\alpha_{t-1}^2G_{t-1}\eta_{t-2}} - \frac{1}{\alpha_t^2G_t\eta_{t-1}}+\frac{1}{\alpha_tG_t\eta_{t-1}}\right)\|\epsilon_{t-1}\|^2$. To do this, we first upper-bound $\frac{1}{\alpha_t^2G_t\eta_{t-1}}-\frac{1}{\alpha_{t-1}^2G_{t-1}\eta_{t-2}}$. Note that:
\begin{align*}
    \frac{1}{\alpha_t^2G_t\eta_{t-1}}-\frac{1}{\alpha_{t-1}^2G_{t-1}\eta_{t-2}}&\le \frac{1}{\alpha_t^2G_t\eta_{t-1}}-\frac{1}{\alpha_{t-1}^2G_t\eta_{t-2}}
\end{align*}
So now we can upper bound $\frac{1}{\alpha_t^2\eta_{t-1}}-\frac{1}{\alpha_{t-1}^2\eta_{t-2}}$ and divide the bound by $G_t$.
\begin{align*}
    \frac{1}{\alpha_t^2\eta_{t-1}}-\frac{1}{\alpha_{t-1}^2\eta_{t-2}}&=t^2\eta_{t-1}^3 G_{t-1}^2 - (t-1)^2 \eta_{t-2}^3G_{t-2}^2\\
    &= C^3(t^{5/7}G_{t-1}^{8/7}-(t-1)^{5/7}G_{t-2}^{8/7})\\
    &\le C^3 t^{5/7}(G_{t-1}^{8/7} - G_{t-2}^{8/7})+C^3(t^{5/7}-(t-1)^{5/7})G_{t-1}^{8/7}
\end{align*}
Next, we analyze $G_{t-1}^{8/7} - G_{t-2}^{8/7}$. Recall our definition $\delta_t = G_t - G_{t-1}$, and we have $0\le \delta_t \le 2\G^2$ for all $t$. Then by convexity of the function $x\mapsto x^{8/7}$, we have
\begin{align*}
    G_{t-1}^{8/7} - G_{t-2}^{8/7}&\le \frac{8\delta_{t-1}}{7}G_{t-1}^{1/7}\le \frac{16\G^2}{7} G_{t-1}^{1/7}
\end{align*}
Therefore we have
\begin{align*}
    \frac{1}{\alpha_t^2\eta_{t-1}}-\frac{1}{\alpha_{t-1}^2\eta_{t-2}}&\le \frac{16C^3\G^2}{7} t^{5/7}G_{t-1}^{1/7}+C^3(t^{5/7}-(t-1)^{5/7})G_{t-1}^{8/7}\\
    &=\frac{16C^3\G^2t^{1/7}}{7G_{t-1}^{4/7}} t^{4/7}G_{t-1}^{5/7}+C^3(t^{5/7}-(t-1)^{5/7})G_{t-1}^{8/7}\\
\intertext{Now use $G_{t-1}\ge \G^2 t^{1/4}$,}
    &\le \frac{16C^3\G^{6/7}}{7} t^{4/7}G_{t-1}^{5/7}+C^3(t^{5/7}-(t-1)^{5/7})G_{t-1}^{8/7}\\
    &\le \frac{16C^3\G^{6/7}}{7} t^{4/7}G_{t-1}^{5/7}+\frac{5C^3}{7(t-1)^{2/7}}G_{t-1}^{8/7}\\
\intertext{Use $G_{t-1}\le D+3\G^2 (t-1)$,}
    &\le \frac{16C^3\G^{6/7}}{7} t^{4/7}G_{t-1}^{5/7}+\frac{5C^3(D+3\G^2(t-1))^{3/7}}{7(t-1)^{2/7}}G_{t-1}^{5/7}\\
    &\le \frac{21C^3\G^{6/7}}{7} t^{4/7}G_{t-1}^{5/7}+\frac{5C^3D^{3/7}}{7(t-1)^{2/7}}G_{t-1}^{5/7}
\intertext{Use the definition of $D$,}
    &\le \frac{21C^3\G^{6/7}}{7} t^{4/7}G_{t-1}^{5/7}+\frac{5C}{7(t-1)^{2/7}}G_{t-1}^{5/7}
\intertext{Use $C\ge 1/\G^{3/7}$,}
    &\le \frac{26C^3\G^{6/7}}{7} t^{4/7}G_{t-1}^{5/7}
\end{align*}
Now observe that
\begin{align*}
    \frac{26C^3\G^{6/7}}{7} t^{4/7}G_{t-1}^{5/7}&\le \frac{26C^2\G^{6/7}}{7\alpha_t \eta_{t-1}} 
\end{align*}
So putting all this together, we have 
\begin{align*}
    -\left(\frac{1}{\alpha_{t-1}^2G_{t-1}\eta_{t-2}} - \frac{1}{\alpha_t^2G_t\eta_{t-1}}+\frac{1}{\alpha_tG_t\eta_{t-1}}\right)\le -\left(\frac{1}{\alpha_tG_t\eta_{t-1}} - \frac{26C^2\G^{6/7}}{7\alpha_tG_t \eta_{t-1}} \right)
\end{align*}
Then since we set $C$ so that $\frac{26C^2\G^{6/7}}{7}=1/2$, we obtain:
\begin{align*}
    -\left(\frac{1}{\alpha_{t-1}^2G_{t-1}\eta_{t-2}} - \frac{1}{\alpha_t^2G_t\eta_{t-1}}+\frac{1}{\alpha_tG_t\eta_{t-1}}\right)\le -\frac{1}{2\alpha_tG_t\eta_{t-1}}
\end{align*}
Putting all this together, we have shown:
\begin{align*}
    \sum_{t=1}^{T} \E[K_{t+1}\|\hat \epsilon_{t+1}\|^2 - K_t\|\hat \epsilon_t\|^2]\le  \E\left[\frac{\log(T+2)}{\eta_T}+\frac{1}{\eta_T}\log\left(\frac{G_{T+1}}{D}\right)-\sum_{t=1}^{T}\frac{\|\hat\epsilon_t\|^2}{2\alpha_{t+1}G_{t+1}\eta_t}\right]
\end{align*}

Now, define the potential $\Phi_t = \frac{3F(\bw_{t+1})}{\eta_t} + K_{t+1}\|\hat \epsilon_{t+1}\|^2$.
Then, by Lemma \ref{thm:onestepnormalized}, we obtain:
\begin{align*}
    &\Phi_{t}-\Phi_{t-1}\le -\|\nabla F(\bw_t)\| +8 \|\hat\epsilon_t\|+\frac{3L\eta_t}{2}\\
    &\quad\quad +3F(\bw_t)\left(\frac{1}{\eta_t}-\frac{1}{\eta_{t-1}}\right) + K_{t+1}\|\hat \epsilon_{t+1}\|^2-K_t\|\hat\epsilon_t\|^2
\end{align*}
So summing over $t$ and taking expectations yields:
\begin{align*}
    \E[\Phi_{T} - \Phi_0]&\le \E\left[\sum_{t=1}^T\frac{3L\eta_t}{2}- \|\nabla F(\bw_t)\|+ \frac{3M}{\eta_T} + \sum_{t=1}^T8\|\hat \epsilon_t\| -\frac{\|\hat\epsilon_t\|^2}{2\alpha_{t+1}G_{t+1}\eta_t}\right.\\
    &\quad\quad\quad\left.+\frac{1}{\eta_T}\log\left(\frac{G_{T+1}}{D}\right)+\frac{\log(T+2)}{\eta_T}\right]
\end{align*}
Now, we examine the term $\sum_{t=1}^T8\|\hat \epsilon_t\| -\frac{\|\hat\epsilon_t\|^2}{2\alpha_{t+1}G_{t+1}\eta_t}$. By Cauchy-Schwarz we have:
\begin{align*}
    \sum_{t=1}^T8\|\hat \epsilon_t\| \le 8\sqrt{\sum_{t=1}^T\frac{\|\hat\epsilon_t\|^2}{2\alpha_{t+1}G_{t+1}\eta_t}\sum_{t=1}^T2\alpha_{t+1}G_{t+1}\eta_t}
\end{align*}
Therefore 
\begin{align*}
    &\sum_{t=1}^T8\|\hat \epsilon_t\| -\frac{\|\hat\epsilon_t\|^2}{2\alpha_{t+1}G_{t+1}\eta_t}\le\sup_M 8\sqrt{M\sum_{t=1}^T2\alpha_{t+1}G_{t+1}\eta_t}-M\\
    &\quad\quad\quad\le 32 \sum_{t=1}^T \alpha_{t+1} G_{t+1}\eta_t\\
    &\quad\quad\quad= 32\sum_{t=1}^T\frac{1}{(t+1)\eta_t}\\
    &\quad\quad\quad\le 32\sum_{t=1}^T\frac{1}{(t+1)\eta_T}\\
    &\quad\quad\quad\le \frac{32(\log(T+1))}{\eta_T}
\end{align*}
Finally, observe that since $G_t \ge \G^2t^{1/4}$, we have $\eta_t \le \frac{C}{\sqrt{T}}$. Therefore $\sum_{t=1}^T \eta_t \le 2C\sqrt{T}$. Putting all this together again, we have
\begin{align*}
    \sum_{t=1}^T \E[\|\nabla F(\bw_t)\|]&\le \Phi_0 + 3LC\sqrt{T} + \E[\eta_T^{-1}]\left[3M + \log(2\G^2(T+1)/D)+ \log(T+2) +32 (\log(T+1))\right]
\end{align*}
Observe that we have $\Phi_0 \le \frac{3M}{\eta_0} + K_1\G^2$.

Let us define $Z=3M + \log(2\G^2(T+1)/D) + \log(T+2) +32 (\log(T)+1) $. Then we have
\begin{align*}
    \sum_{t=1}^T \E[\|\nabla F(\bw_t)\|]&\le \Phi_1 + 3LC\sqrt{T} + \E[\eta_T^{-1}] Z
\end{align*}
Now we look carefully at the definition of $G_t$ and $\eta_t$. By Jensen inequality, we have
\begin{align*}
    \E[\eta_T^{-1}]&=\frac{1}{C}(T+1)^{3/7}\E\left[\left(D+2\G^2+\G T^{1/4}+\sum_{t=1}^T \|\nabla f(\bx_t, \xi_t)-\nabla f(\bx_t,\xi_t')\|^2\right)^{2/7}\right]\\
    &\le \frac{(T+1)^{3/7}(D+2\G^2+\G T^{1/4}+4T\sigma^2)^{2/7}}{C}\\
    &= O\left(\sqrt{T} + \sigma^{4/7} T^{5/7}\right)
\end{align*}
The Theorem statement now follows.
\end{proof}

\end{document}